\documentclass{article}

 \usepackage[final]{neurips_2020}
\setcitestyle{square,numbers,comma,sort&compress}

\usepackage[utf8]{inputenc} %
\usepackage[T1]{fontenc}    %
\usepackage{hyperref}       %
\usepackage{url}            %
\usepackage{booktabs}       %
\usepackage{amsfonts}       %
\usepackage{nicefrac}       %
\usepackage{microtype}      %
\usepackage{graphicx}
\usepackage{todonotes}
\usepackage{caption} 
\usepackage{amsthm}
\usepackage{wrapfig}
\usepackage{graphics}
\usepackage{framed}
\usepackage[export]{adjustbox}
\captionsetup[table]{skip=5pt}
\graphicspath{ {images/} }
\title{STEER: Simple Temporal Regularization For Neural ODEs}
\usepackage{amsmath,amssymb}
\DeclareMathOperator{\E}{\mathbb{E}}

\usepackage{arydshln}

\def\figref#1{Fig.~\ref{#1}}

\def\tabref#1{Table~\ref{#1}}
\def\eqnref#1{Equation~\ref{#1}}

\newtheorem{theorem}{Theorem}[section]
\newtheorem{lemma}[theorem]{Lemma}

\author{Arnab Ghosh\\
University of Oxford\\
\small{\texttt{arnabg@robots.ox.ac.uk}}
\And 
Harkirat Singh Behl\\
University of Oxford\\
\small{\texttt{harkirat@robots.ox.ac.uk}}
\And 
Emilien Dupont\\
University of Oxford\\
\small{\texttt{emilien.dupont@stats.ox.ac.uk}}
\AND
Philip H. S. Torr\\
University of Oxford\\
\small{\texttt{phst@robots.ox.ac.uk}}
\And 
Vinay Namboodiri \\
University of Bath\\
\small{\texttt{vpn22@bath.ac.uk}}
}

\begin{document}

\maketitle

\begin{abstract}
Training Neural Ordinary Differential Equations (ODEs) is often computationally expensive. Indeed, computing the forward pass of such models involves solving an ODE which can become arbitrarily complex during training. Recent works have shown that regularizing the dynamics of the ODE can partially alleviate this. In this paper we propose a new regularization technique: randomly sampling the end time of the ODE during training. The proposed regularization is simple to implement, has negligible overhead and is effective across a wide variety of tasks. Further, the technique is orthogonal to several other methods proposed to regularize the dynamics of ODEs and as such can be used in conjunction with them. We show through experiments on normalizing flows, time series models and image recognition that the proposed regularization can significantly decrease training time and even improve performance over baseline models.
\end{abstract}

\section{Introduction}
\begin{wrapfigure}[18]{r}{0.5\textwidth}
    \centering
    \begin{tabular}{*{4}{c@{\hspace{3px}}}}
    \includegraphics[width=0.48\linewidth]{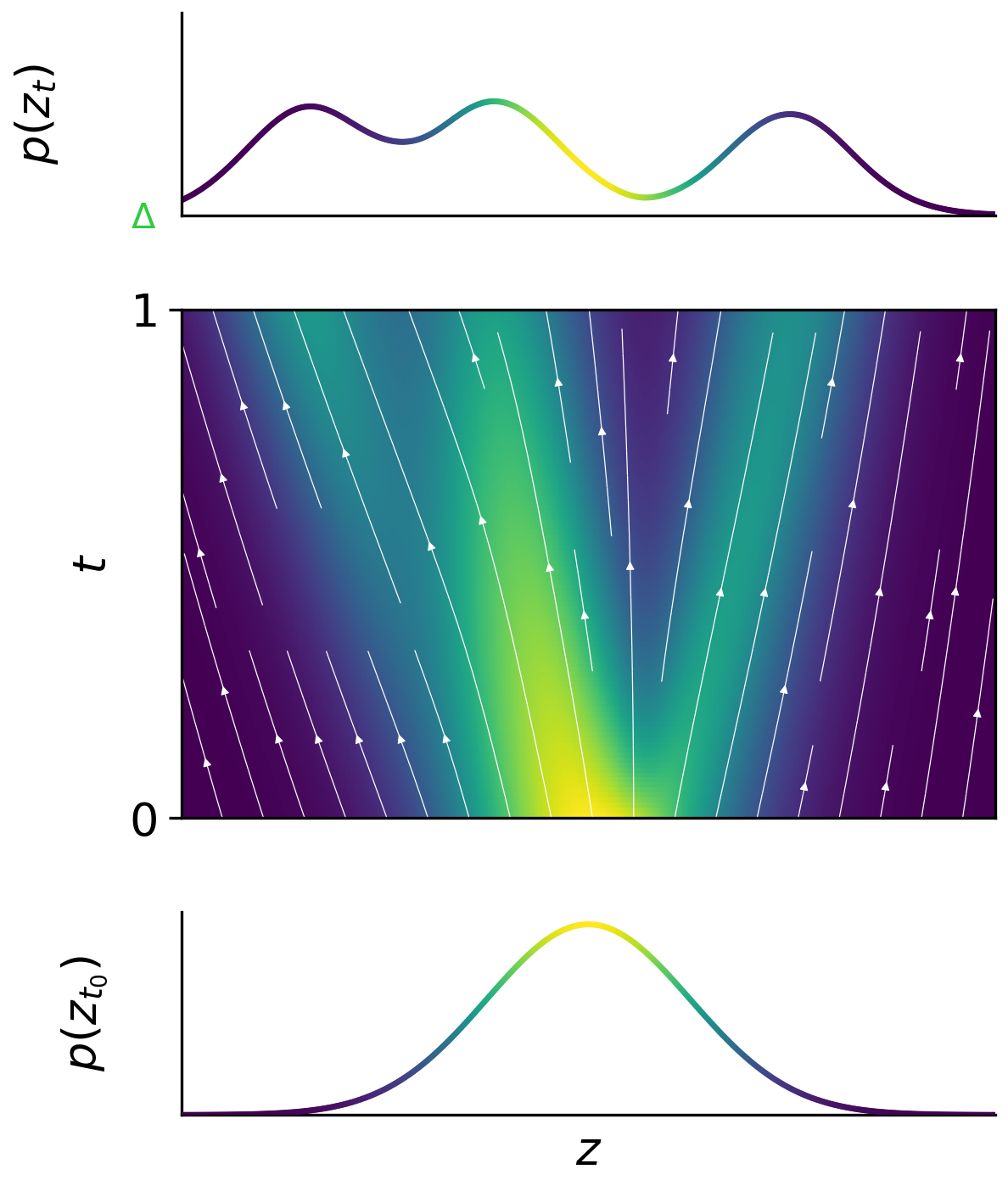}  & 
    \includegraphics[width=0.48\linewidth]{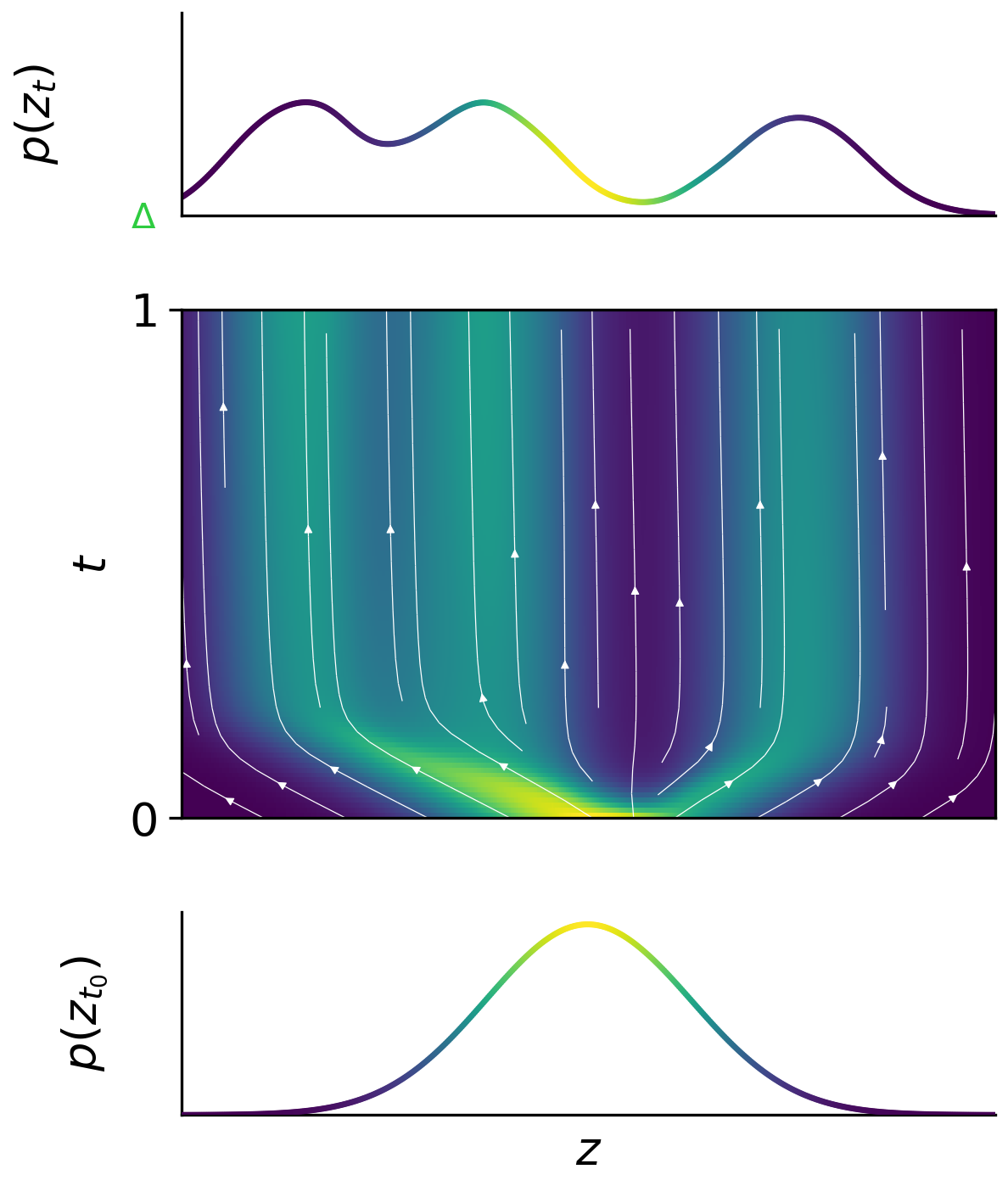}
    \\
\end{tabular}
    \caption{Flow fields of a standard Neural ODE (left) vs one equipped with STEER (right). These models transform a simple initial distribution $p(z_{t_0})$ to the target distribution $p(z_{t})$. STEER learns simpler flows, which enable faster training.}
    \label{fig:main_steer}
\end{wrapfigure}

Neural Ordinary Differential Equations (ODE) are an elegant approach for building continuous depth neural networks \cite{chen2018neural}. They offer various advantages over traditional neural networks, for example a constant memory cost as a function of depth and invertibility by construction. %
As they are inspired by dynamical systems, they can also take advantage of the vast amount of mathematical techniques developed in this field.
Incorporating continuous depth also enables efficient density estimation \cite{grathwohl2018ffjord} and modeling of irregularly sampled time series \cite{rubanova2019latent}.

The aforementioned advantages come at the expense of a significantly increased computational cost compared with traditional models like Residual Networks \cite{he2016deep}.
The training dynamics of Neural ODEs become more complicated as training progresses and the number of steps taken by the adaptive ODE solver (typically referred to as the number of function evaluations) can become prohibitively large \cite{dupont2019augmented}. In fact, several works have argued that this is one of the most pressing limitations of Neural ODEs \cite{chen2018neural, dupont2019augmented, finlay2020train}. It is therefore important to design methods that can help reduce the computational cost of this tool.

A natural approach for tackling this issue is to explore regularization techniques that simplify the underlying dynamics for these models. Indeed, simpler dynamics have shown benefits in terms of faster convergence and a lower number of function evaluations both during training and testing \cite{dupont2019augmented}.
\cite{finlay2020train} recently proposed an interesting technique for explicitly regularizing the dynamics based on ideas from optimal transport showing significant improvements in the effective convergence time.
In this work, we propose a regularization technique in order to further accelerate training. Our regularization is orthogonal to previous approaches and can therefore be used in conjunction with them. Our technique is extremely simple: during training we randomly sample the upper limit of the ODE integration.

Specifically, the solver in Neural ODEs takes as input the integration limits $(t_0,t_1)$.
We propose to introduce regularization by randomly perturbing the end time parameter $t_1$ to be $t_1\pm \delta$ where $\delta$ is a random variable.
Empirically, we demonstrate that, in a wide variety of tasks, the proposed regularization achieves comparable performance to unregularized models at a significantly lower computational cost. We test our regularization technique on continuous normalizing flows \cite{grathwohl2018ffjord}, irregularly sampled time series \cite{rubanova2019latent} and image recognition tasks and show that it improves performance across all these tasks. It also performs better in certain instances of Stiff ODEs. Stiff ODEs are a widely studied topic in the field of engineering\cite{curtiss1952integration}. During training the dynamics learnt by the implicit model parametrized by the neural network can turn into an instance of a Stiff ODE. Techniques which could tackle the instances of Stiff ODE would prove to be instrumental as these implicit deep learning models mature.

\section{Related Work}
Connections between deep neural networks and dynamical systems were made as early as \cite{lagaris1998artificial}. The adjoint sensitivity technique was explored in \cite{pearlmutter1995gradient}. Some early libraries such as dolfin \cite{farrell2013automated,rackauckas2017differentialequations}, implemented the adjoint method for neural network frameworks. Previous works have shown the connection between residual networks and dynamical systems \cite{ruthotto2019deep,haber2017stable,lu2017beyond, weinan2017proposal,wang2019resnets, avelin2019neural}. By randomly dropping some blocks of a residual network, it was demonstrated that deep neural networks could be trained \cite{huang2016deep} or tested \cite{veit2016residual,ghosh2019interactive}
with stochastic depth, and that such a training procedure produces ensembles of shallower networks. 

Neural ODEs \cite{chen2018neural} introduced a framework for training ODE models with backpropagation. Recent works \cite{massaroli2020dissecting, zhang2019approximation, cuchiero2019deep, jabir2019mean, massaroli2020stable, davis2020time, hanshu2019robustness,norcliffe2020second} have tried to analyze this framework theoretically and empirically. Neural ODEs were extended to work with flow based generative models \cite{grathwohl2018ffjord}. In contrast to standard flow based models such as \cite{kingma2018glow, dinh2016density, papamakarios2017masked, papamakarios2019normalizing}, continuous normalizing flows do not put any restrictions on the model architecture. 
Furthermore, Neural ODEs have been applied in the context of irregularly sampled time series models \cite{rubanova2019latent}, reinforcement learning \cite{du2020model} and for set modeling\cite{li2020exchangeable}. Neural ODEs have also been explored for harder tasks such as video generation \cite{yildiz2019ode2vae,kanaasimple} and for modeling the evolving dynamics of graph neural networks \cite{xhonneux2019continuous}. 
Recent work has also extended Neural ODEs to stochastic differential equations \cite{li2020scalable, liu2019neural, hodgkinson2020stochastic, oganesyan2020stochasticity, wu2020stochastic, jia2019neural}.

The methods most relevant to our work are \cite{kelly2020learning, finlay2020train} and \cite{dupont2019augmented}. A regularization based on principles of optimal transport which simplifies the dynamics of continuous normalizing flow models, was introduced in \cite{finlay2020train} to make training faster. Equations which are easier to learn were analyzed in \cite{kelly2020learning} whereby they used a differential surrogate for the time cost of standard numerical solvers, using higher-order derivatives of solution trajectories.  Augmented Neural ODEs \cite{dupont2019augmented} introduced a simple modification to Neural ODEs by adding auxillary dimensions to the ODE function and showing how this simplified the dynamics. Our work is complementary to these methods, and we empirically demonstrate that our method can be used to further improve the performance of these techniques.

There is a huge treasure of work which deals with instability in training ODEs\cite{dahlquist198533,dahlquist1976error}. Curtiss et al. \cite{curtiss1952integration} were one of the first works that dealt with solving stiff equations. It was observed in Curtiss and Hirschfelder \cite{curtiss1952integration} that explicit methods failed for the numerical solution of ordinary differential equations that model certain chemical reactions. It is very difficult to integrate "stiff" equations by ordinary numerical methods. Small errors are rapidly magnified if the equations are integrated. The other case might be that the errors oscillate rapidly. Implicit methods such as the second order Adams-Moulton (AM2) \cite{wanner1996solving} are elegant techniques which deal with some of the problems of stiff ODEs. In the context of Neural ODEs\cite{chen2018neural} implicit computations reduces the efficiency of the solvers. This effect is especially apparent in complex modeling tasks such as generative modeling using continuous normalizing flows\cite{grathwohl2018ffjord} and irregularly sampled timeseries models\cite{rubanova2019latent}. Hence an analysis of simple scenarios of stiff equations which cause problems for standard Neural ODEs are discussed in the following sections.  

\section{Methodology}

\subsection{Neural ODEs}
Neural ODEs can be interpreted as a continuous equivalent of Residual networks \cite{he2016deep}. The transformation of a residual layer is $\mathbf{z}_{t+1}= \mathbf{z}_t + f_t(\mathbf{z}_t)$, where the representation at layer $t$ is $\mathbf{z}_t$ and $f_t( \mathbf{z}_t)$ is a function parametrized by a neural network. The output dimension of the function $f_t(\mathbf{z}_t)$ should be the same as the input dimension to allow it to be added to the previous layer's features $\mathbf{z}_t$. This transformation can be rewritten as

\begin{equation}
    \frac{\mathbf{z}_{t+\delta_t}-\mathbf{z}_t}{ \delta_t } = f_t(\mathbf{z}_t) , \; \; \; \delta_t = 1,
    \label{eq:resnet}
\end{equation}

which can be interpreted as finite difference approximation of the derivative. In contrast to traditional deep neural networks where $\delta_t=1$ is fixed, Neural ODEs \cite{chen2018neural} introduced a continuous version by analysing \eqnref{eq:resnet} in the limit $\delta_t \rightarrow 0$. The resulting equation becomes 

\begin{equation}
    \frac{d\mathbf{z}(t)}{dt} = f(\mathbf{z}(t),t).
    \label{eq:neural_ode_def}
\end{equation}

To transform an input $\mathbf{x}$ to an output $\mathbf{y}$, Residual networks use a sequence of $k$ functions to transform the representation. In an analogous manner, Neural ODEs evolve the  representation from the initial time $t=t_0, \; \mathbf{z}(t_0)= \mathbf{x}$, to the final time $t=t_1, \; \mathbf{z}(t_1)= \mathbf{y}$. The representation $\mathbf{z}(t)$ at any given time $t$ can then be obtained by solving \eqnref{eq:neural_ode_def} using an off-the-shelf ODE solver with the function $f(\mathbf{z}(t),t)$ parameterized by the neural network. To update the parameters of the function $f(\mathbf{z}(t),t)$ such that $\mathbf{z}(t_1) \approx \mathbf{y}$ we backpropagate through the operations of the ODE solver to obtain the appropriate gradients.

\subsection{Temporal Regularization}
The formulation of a standard Neural ODE is given by

\begin{equation}
    \mathbf{z}(t_1)= \mathbf{z}(t_0) + \int_{t_0}^{t_1} f(\mathbf{z}(t) , t , \mathbf{\theta}) dt = \text{ODESolve}  (\mathbf{z}(t_0) , f , t_0, t_1 , \mathbf{\theta}  ),
    \label{eq:ode_formulation}
\end{equation}

where $\mathbf{z}(t)$ is the latent representation at any time $t$. $\mathbf{\theta}$ are the parameters of the function $f$ and $(t_0, t_1)$ are the limits of the integration. The initial condition of the ODE is set as $\mathbf{z}(t_0)=\mathbf{x}$.

Neural ODEs often take an excessively large time to train, especially in the case of continuous normalizing flows \cite{grathwohl2018ffjord} . The dynamics become more complicated as training progresses thus leading to an increase in the number of function evaluations. In the context of Neural ODEs, the dynamics of the ODE and training time are intricately related. In practice, simpler dynamics \cite{dupont2019augmented, finlay2020train} often lead to faster training with fewer function evaluations. Regularization is commonly used in machine learning to ease the training of a model and improve its generalization. Some common instances include Batchnorm \cite{ioffe2015batch} and early stopping \cite{yao2007early}.

Neural ODEs continuously evolve the representation $\mathbf{z}(t)$ with time $t$. Thus in the context of neural ODEs, an alternative type of regularization is possible, namely temporal regularization. In this work, we propose a simple regularization technique that exploits this temporal property. We propose to perturb the final time $t_1$ for the integration of the Neural ODE in \eqnref{eq:ode_formulation} during training. At each training iteration, the final time $t_1$ for the integration is stochastically sampled. More formally, if we assume $t_0<t_1$, then our regularization can be formulated as 
\begin{equation}
\begin{split}
    \label{skip_formulation}
    \mathbf{z}(t_1) &= \mathbf{z}(t_0) + \int_{t_0}^{T} f(\mathbf{z}(t) , t , \theta) dt = \text{ODESolve}(\mathbf{z}(t_0) , f , t_0, T , \theta  ), \\
    T &\sim \text{Uniform} ( t_{1}-b, t_{1}+ b ),  \\
    b &< t_1-t_0 :  \text{parameter controlling interval of uniform distribution}. \\
\end{split}
\end{equation}

\begin{wrapfigure}[10]{r}{0.45\linewidth}
    \centering  
    \frame{\includegraphics[trim=2cm 16cm 3cm 7cm, clip=true,width=0.9\linewidth]{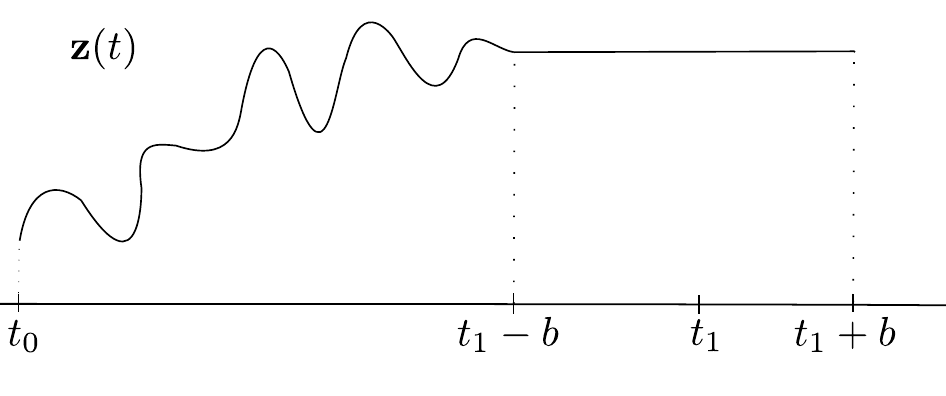}} 
    \caption{Effective behavior of Neural ODEs with STEER regularization. The solution is reached at a time $t_1-b$ instead of time $t_{1}$.}
    \label{fig:behavior}
\end{wrapfigure}

We call the technique STEER, which stands for Simple Temporal Regularization. 
Since Neural ODEs model a continuous series of transformations, applying STEER is essentially equivalent to performing a variable number of transformations on the input $\mathbf{x}$ in each training iteration, thereby introducing a new form of stochasticity in the training process.
 
We observe that the proposed regularization effectively ensures convergence to the solution at time $t_1-b$. The effective behavior of $\mathbf{z}(t)$ can be see in \figref{fig:behavior} 

\subsection{Stiff ODEs} 
\label{subsection:stiff_ode}

Stiff ODEs have been extensively researched in the field of differential equations \cite{brugnano2011fifty}. Inspite of years of research a concrete definition of Stiff ODEs has not been agreed upon \cite{soderlind2015stiffness}. We highlight how some of the problems of Stiff ODEs could impact the training dynamics of Neural ODEs in specific scenarios. Certain discussions of Stiff ODEs define that the stiffness concerns the underlying mathematical form of the equation, integration time and a set of initial data.

In terms of the mathematical equation, stiffness is a problem that can arise when numerically solving ODEs. In such systems there is typically a fast and a slow transient component which causes adaptive ODE solvers to take exceedingly small steps. This increases the number of function evaluations and the time taken to integrate the function. %
Furthermore, in Neural ODEs, the function $\frac{d \mathbf{z} }{ dt }$ is parameterized by a neural network which is updated during training, and could become stiff. This would be hard for a traditional ODE solver to integrate efficiently, thereby making this a limitation.

Stiffness ratio is one of the techniques used to characterize some forms of stiff equations. The stiffness ratio is used to characterize the stiffness for the general case of linear constant coefficient systems. These systems can be represented as 
\begin{equation}
    \frac{d \mathbf{z} }{ dt } = \mathbf{Az} + f(t),
    \label{eq:stiff}
\end{equation}
where $\mathbf {z} ,f(t) \in \mathbb{R} ^{n}$ and $\mathbf {A}$ is a $n\times n$ matrix with eigenvalues  $\lambda_{i}\in \mathbb{C} ,i=1,2,\ldots ,n$ (assumed distinct) and corresponding eigenvectors  $\mathbf {c_{i}} \in \mathbb{C} ^{n},i=1,2,\ldots ,n$. The eigenvalues and eigenvectors can be complex. 
The general solution of \eqnref{eq:stiff} takes the form
\begin{equation}
    \mathbf {z}(t)=\sum_{i=1}^{n}\kappa_{i}\exp(\lambda _{i}t) \mathbf{c}_{i}+ \mathbf{g}(t) .
\end{equation}

If $\lambda_{i}$ is complex, the term $\exp(\lambda _{i}t)$ varies sinusoidally. The stiffness ratio is defined as $\frac{\text{max}_{i=1 \dots n} |Re(\lambda_{i})|}{ \text{min}_{i=1 \dots n}|Re(\lambda_{i})|}$, where $Re(\lambda_i)$ denotes the real component of $\lambda_i \in \mathbb {C}$. Ordinarily, temporal regularization might fail in the face of stiffness. However, in our empirical experiments we observed that at least in the case of toy problems, STEER actually performed better. This analysis is provided in the next section. 

\section{Experiments}
In this section, we demonstrate the effectiveness of the proposed method on several classes of problems.  First, we conduct experiments on continuous normalizing flow based generative models, and compare against FFJORD \cite{grathwohl2018ffjord} and RNODE \cite{finlay2020train}. 
Then, we conduct experiments on irregularly sampled time-series models, and compare against the models and baselines proposed in Latent ODEs \cite{rubanova2019latent}. 
We also conduct experiments on feedforward models for image recognition, and compare against Neural ODEs and Augmented Neural ODEs \cite{dupont2019augmented}. We finally evaluate STEER and standard Neural ODEs on a toy stiff ODE experiment.

\begin{figure}[]
    \centering
    \begin{tabular}{*{4}{c@{\hspace{3px}}}}
    \includegraphics[width=0.32\linewidth]{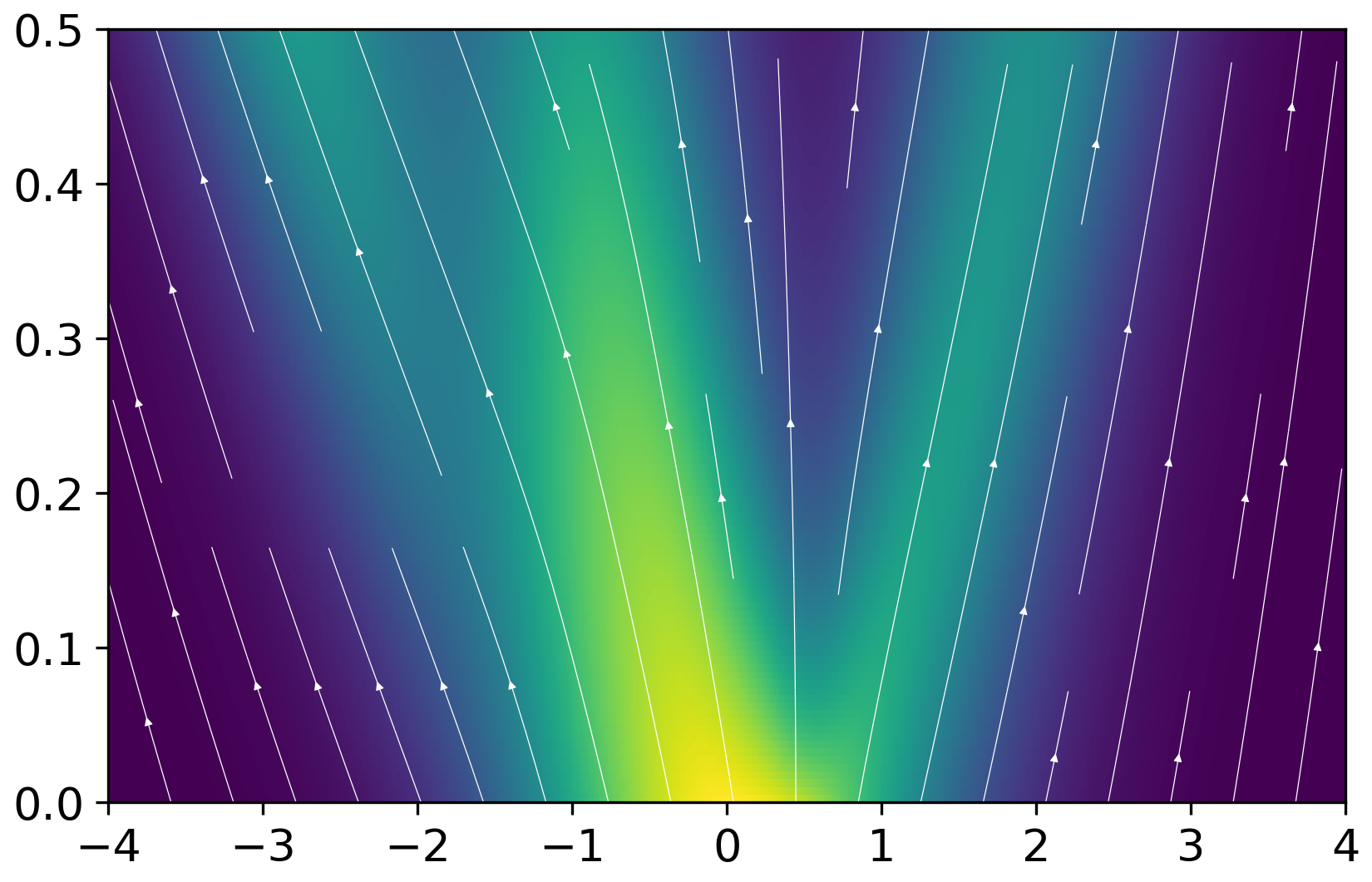} & 
    \includegraphics[width=0.32\linewidth]{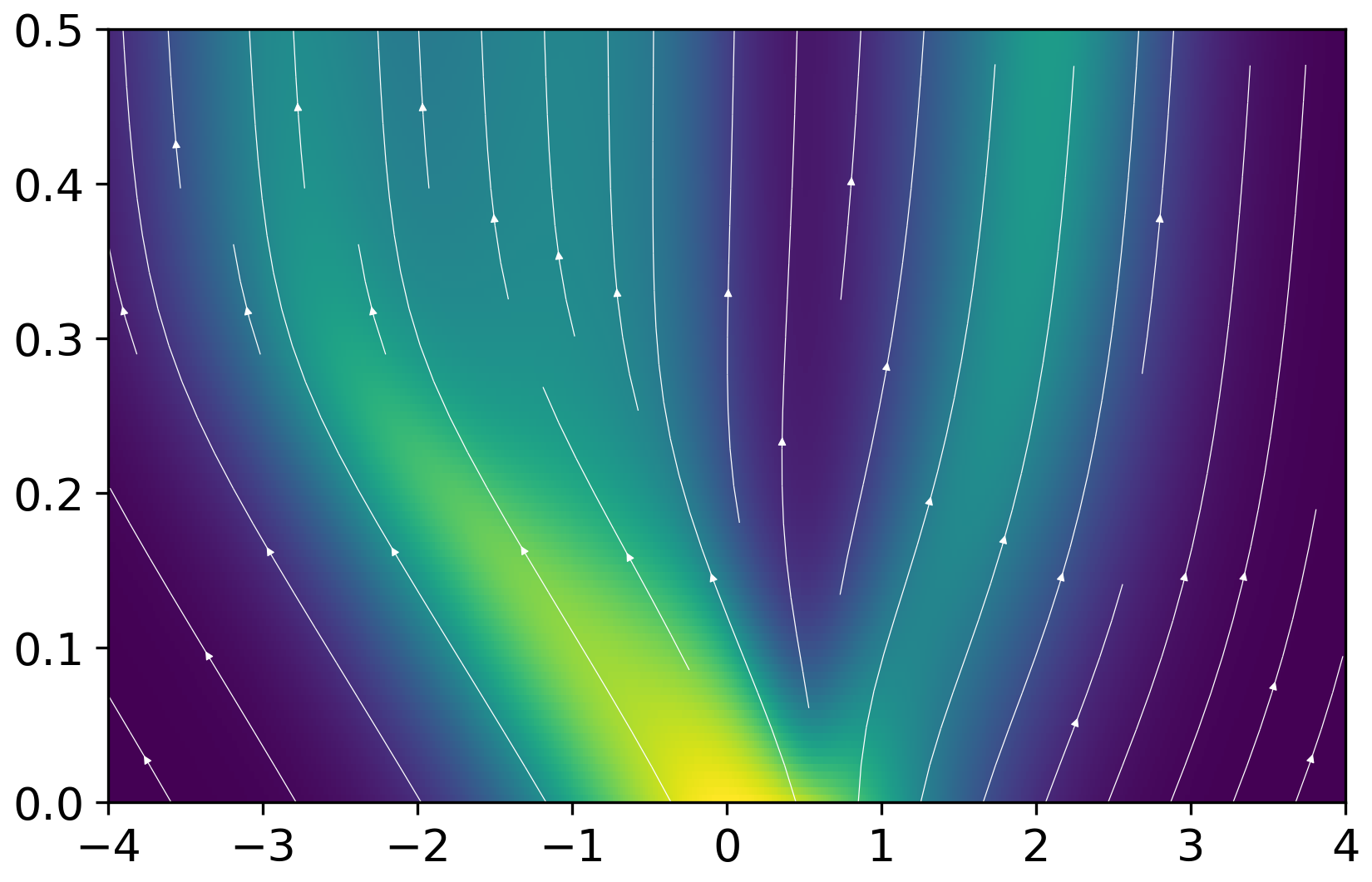} &
    \includegraphics[width=0.32\linewidth]{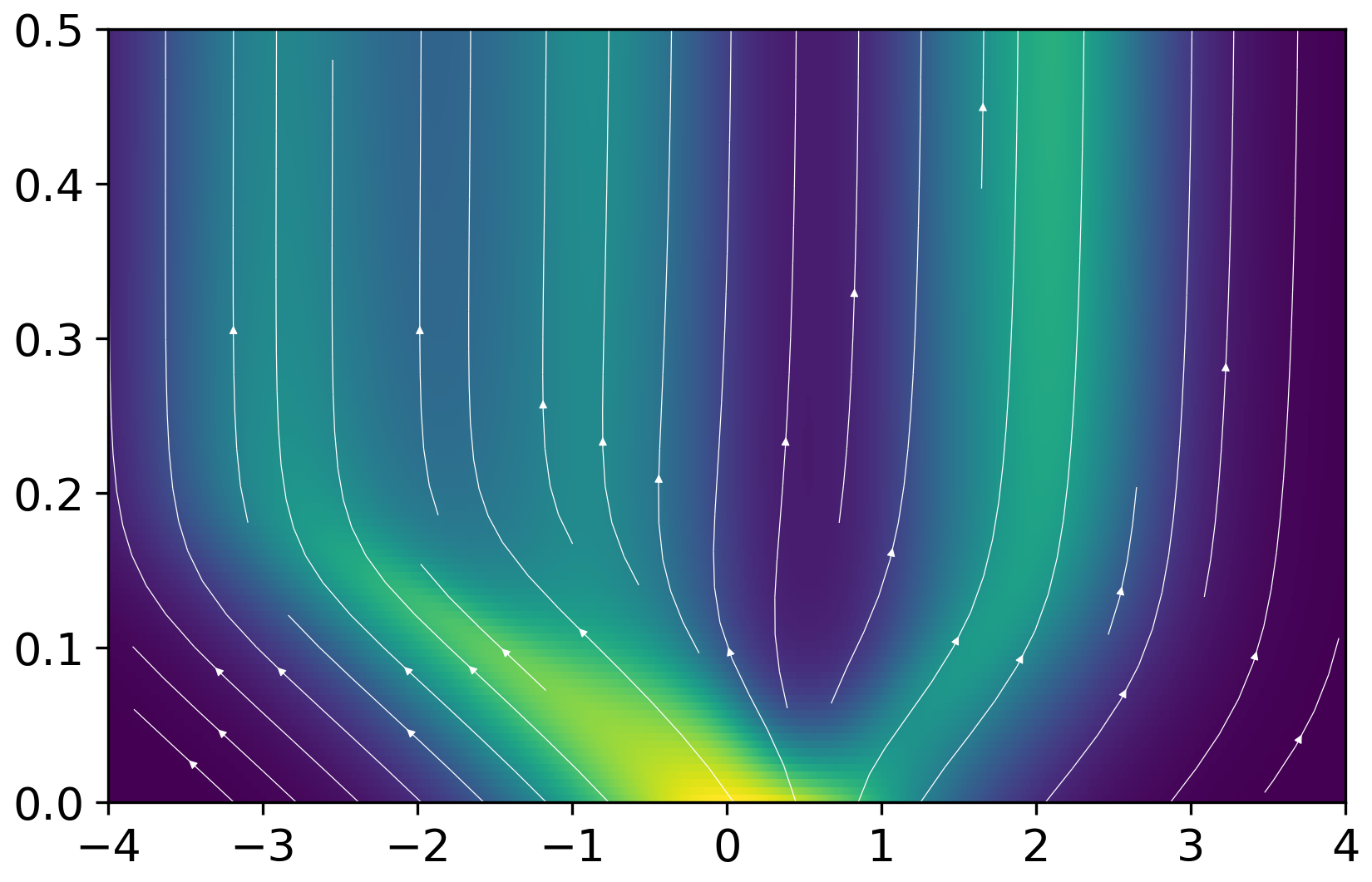} &
    \\
    $b=0$ & $b=0.25$ & $b=0.375$ \\
\end{tabular}
    \caption{Plots showing the path taken by FFJORD with STEER regularization. STEER regularization ($b=0.25$ and $b=0.375$) performs the same transformation in shorter paths.}
    \label{fig:path_b_comparison}
\end{figure}

\subsection{Generative Models}
Continuous normalizing flow based generative models \cite{grathwohl2018ffjord, chen2018neural} provide an alternate perspective to generative modeling. 
Let $ p( \mathbf{x} )$ be the likelihood of a datapoint $\mathbf{x}$ and $p( \mathbf{z} )$ be the likelihood of the latent $\mathbf{z}$. 
FFJORD \cite{grathwohl2018ffjord} showed that under certain conditions the change in log probability also follows a differential equation $\frac{d \log(p( \mathbf{z}(t)))}{d t} = -\text{tr} \left( \frac{df}{d \mathbf{z} (t)} \right)$. Thus the total change in log probability can be obtained by integrating the equation which leads to %
\begin{equation}
    \log p( \mathbf{z} (t_1)) = \log p( \mathbf{z} (t_0)) - \int_{t_0}^{t_1} \text{tr} \left( \frac{df}{d \mathbf{z} (t)} \right) dt.  
\end{equation}
This allows to compute the point $\mathbf{z_0}$ which generates a given datapoint $\mathbf{x}$, and $\log p( \mathbf{x} )$ under the model by solving the initial value problem:
\begin{align}\label{eq:reverseivp} 
    \underbrace{\begin{bmatrix}
    \mathbf{z_0} \\ \log p(\mathbf{x}) - \log p_{\mathbf{z_0}}(\mathbf{z_0})
    \end{bmatrix}}_{\textnormal{solutions}} = \underbrace{\int_{T}^{t_0}
    \begin{bmatrix}
    f( \mathbf{z} (t), t; \theta) \\ - \text{tr} \left( \frac{df}{d \mathbf{z}(t)} \right) 
    \end{bmatrix}dt}_{\textnormal{dynamics}},\qquad
    \underbrace{\begin{bmatrix}
    \mathbf{z}(T) \\ \log p(\mathbf{x}) - \log p(\mathbf{z}(T))
    \end{bmatrix} = 
    \begin{bmatrix}
     \mathbf{x} \\ \mathbf{0}
    \end{bmatrix}  }_{\textnormal{initial values}} .
\end{align}
The combined dynamics of $\mathbf{z}(t)$ and the log-density of the sample is integrated backwards in time from $T$ to $t_0$. The log likelihood $\log p(\mathbf{x})$ can be computed by adding $\log p_{\mathbf{z_0}}(\mathbf{z_0})$ to the solution of \eqnref{eq:reverseivp}. A major drawback of continuous normalizing flow based models is extremely slow convergence.
STEER regularization enables faster convergence by sampling $T \sim \text{Uniform}(t_1-b,t_1+b)$ at each iteration.

We focus on the task of density estimation of image datasets using the same architectures and experimental setup as FFJORD  \cite{grathwohl2018ffjord}. More details are provided in the supplementary material.
The performance of the model is evaluated on multiple datasets, namely MNIST \cite{lecun1990handwritten}, CIFAR-10 \cite{krizhevsky2009learning} and ImageNet-32 \cite{Deng09ImageNet}, and the results are shown in \tabref{tab:cnf_results}.  The performance is also analyzed in the simple scenario of a 1D Mixture of Gaussians as shown in \figref{fig:path_b_comparison}. 

\begin{table}
    \centering
    \resizebox{\columnwidth}{!}{%
        \begin{tabular}{l c c c c c c}
        \toprule
        & \multicolumn{2}{c}{MNIST} & \multicolumn{2}{c}{CIFAR-10} & \multicolumn{2}{c}{IMAGENET-32}\\
        & BITS/DIM & TIME & BITS/DIM & TIME & BITS/DIM & TIME \\
        \midrule
        FFJORD, VANILLA. &  $0.974$ & $68.47$ & $3.40$ & $> 5$ days & $3.96$ & $>5$ days\\
        FFJORD RNODE. \cite{finlay2020train} & $0.973$ & $24.37$ & $3.398$ & $31.84$ & $2.36$ & $30.1$ \\
        \hdashline
        FFJORD, VANILLA. + STEER ($b=0.5$) &  $0.974$ & $51.21$ & $3.40$ & $86.34$ & $3.84$ & $>5$ days \\
        FFJORD, VANILLA. + STEER ($b=0.25$) &  $0.976$ & $58.21$ & $3.41$ & $103.4$ & $3.87$ & $>5$ days \\
        FFJORD RNODE + STEER ($b=0.5$) & $\mathbf{0.971}$ & $\mathbf{16.32}$ & $\mathbf{3.397}$ & $\mathbf{22.24}$ & $\mathbf{2.35}$ & $\mathbf{24.9}$ \\
        FFJORD RNODE + STEER ($b=0.25$) & $0.972$ & $19.71$ & $3.401$ & $25.24$ & $2.37$ & $27.84$ \\
        \bottomrule %
        \end{tabular}
        }
    \caption{Test accuracy and training time comparison on various image datasets. Unless mentioned as days, the time reported is in hours.}  %
    \label{tab:cnf_results}
\end{table}

It can be seen in \tabref{tab:cnf_results} that models with STEER regularization converge much faster. In some cases, such as on CIFAR-10, our model is $30\%$ faster than the state of the art.

\figref{fig:path_b_comparison} shows the evolution of the probability distribution from the latent distribution $p(\mathbf{z})$ to the data distribution $p(\mathbf{x})$ at different test times. It can be seen that the trajectories reach the final states at an earlier test time with STEER regularization.

\begin{table}
\small{
\parbox{.48\textwidth}{
    \centering
        \begin{tabular}{l c}
        \toprule
        \textbf{Model} & \textbf{Accuracy} \\ \midrule
        Latent ODE (ODE enc) & $0.846 \pm 0.013$ \\
        Latent ODE (RNN enc.) &  $0.835 \pm 0.010$ \\
        ODE-RNN &  $0.829 \pm 0.016$ \\
        \midrule
        Latent ODE (ODE enc, STEER) & $\mathbf{0.880 \pm 0.011}$ \\ 
        Latent ODE (RNN enc, STEER) & $0.865 \pm 0.021$ \\ 
        ODE-RNN (STEER) & $0.872 \pm 0.012$ \\ 
        \bottomrule %
        \end{tabular}
    \caption{Per-time-point classification accuracies on the Human Activity dataset.}
    \label{tab:latent_ode_human_activity}
}
\hfill
\parbox{.48\textwidth}{
    \centering
        \begin{tabular}{l c}
        \toprule
        \textbf{Model} & \textbf{AUC} \\ \midrule
        Latent ODE (ODE enc) & $0.829 \pm 0.004$ \\
        Latent ODE (RNN enc.) &  $0.781 \pm 0.018$ \\
        ODE-RNN & $\mathbf{0.833 \pm 0.009}$ \\
        \midrule
        Latent ODE (ODE enc, STEER) & $0.810 \pm 0.018$ \\ 
        Latent ODE (RNN enc, STEER) & $0.772 \pm 0.014$ \\ 
        ODE-RNN (STEER) & $0.811 \pm 0.007$ \\ 
        \bottomrule %
        \end{tabular}
    \caption{Per-sequence classification AUC on Physionet.}
    \label{tab:latent_ode_physionet}

}
}
\end{table}

\begin{table}
    \centering
    \resizebox{\columnwidth}{!}{
        \begin{tabular}{l | c c c c | c c c c}
        \toprule
         & \multicolumn{4}{c}{Interpolation (\% Observed Pts.)} & \multicolumn{4}{c}{Extrapolation (\% Observed Pts.)}\\
        Model & 10 \% & 20 \% & 30 \% & 50 \% & 10 \% & 20 \% & 30 \% & 50 \% \\
        \midrule
        
        Latent ODE (ODE enc) & $0.360$ &  $0.295$ &  $0.300$ & $0.285$ & $1.441$ & $1.400$ & $1.175$ & $1.258$ \\
        Latent ODE (RNN enc.) & $2.477$ & $0.578$ & $2.768$ & $0.447$ & $1.663$ & $1.653$ & $1.485$ & $1.377$ \\
        ODE-RNN & $1.647$ & $1.209$ & $0.986$ & $0.665$ & $13.508$ & $31.950$ & $15.465$ & $26.463$ \\
        \midrule
        Latent ODE (ODE enc, STEER) & $\mathbf{0.230}$ &  $\mathbf{0.235}$ &  $\mathbf{0.246}$ & $\mathbf{0.210}$ & $\mathbf{1.01}$ & $\mathbf{0.98}$ & $\mathbf{1.01}$ & $\mathbf{1.04}$ \\
        Latent ODE (RNN enc., STEER) & $2.31$ & $0.561$ & $2.534$ & $0.386$ & $1.541$ & $1.498$ & $1.236$ & $1.119$ \\
        ODE-RNN (STEER) & $1.528$ & $1.09$ & $0.796$ & $0.603$ & $11.508$ & $29.34$ & $14.23$ & $23.528$ \\
        \bottomrule %
        \end{tabular}
    }
    \caption{Test Mean Squared Error (MSE) ($ \times 10^{-2}$
) on the MuJoCo dataset.}
    \label{tab:mujoco_results}
\end{table}

\subsection{Times Series Models}

For time series models, the function is evaluated at multiple times $t_0,..t_n$. We employ the regularization between every consecutive pair of points $(t_i,t_{i+1})$. More formally, if the time is split into consecutive intervals $ (t_0,t_1)...(t_{n-1},t_n)$, we start with $t_0$ and the initialization $\mathbf{z}(t_0)$ and sample the end time as $T \sim \text{Uniform}(t_1-b,t_1+b)$. The resulting integration using the ODE solver gives us the latent representation $\mathbf{z}(t_1)$. We repeat the same procedure where the limits of integration are now $(t_1,t_2)$. In this iteration $t_1$ is not altered while $T \sim \text{Uniform}(t_2-b,t_2+b)$ is sampled to obtain $\mathbf{z}(t_2)$  and so on. 
In the case of irregularly sampled time series models, the length of the interval $t_{i+1}-t_i$ can itself vary. The parameter $b$ in this case is adaptive such that $b=|t_{i+1}-t_i|-\epsilon$. The $\epsilon$ parameter is added to avoid numerical instability issues. In the case of time series models the parameter $b$ changes at each forward pass thus adding another source of stochasticity.  %

Latent ODEs \cite{rubanova2019latent} introduced irregularly sampled time series models in which Neural ODEs perform much better. Indeed these datasets break the basic assumptions of a standard RNN, namely that the data are not necessarily collected at regular time intervals. This is the case in several real life scenarios such as patient records or human activity recognition which are observed at irregularly sampled timepoints. 

The Mujoco experiments consist of a physical simulation of the "Hopper" model from the Deepmind control suite. %
The initial condition uniquely determines the trajectory. It consists of $10,000$ sequences of $100$ time points each. 
The Human Activity dataset \cite{rubanova2019latent} consists of sensor data fitted to individuals. The sensor data is collected using tags attached to the belt, chest and ankles of human performers. It results in $12$ features in total. The task involves classifying each point as to which activity was being performed. Physionet \cite{silva2012predicting} consists of time series data from initial $48$ hours of ICU admissions. It consists of sparse measurements where only some of the features' measurements are taken at an instant. The measurement frequency is also irregular which makes the dataset quite challenging. To predict mortality, binary classifiers were constructed by passing the hidden state to a $2$ layer binary classifier.

We use the same architecture and experiment settings as used in Latent ODEs \cite{rubanova2019latent}. For the Mujoco experiment $15$ and $30$ latent dimensions were used for the generative and recognition models respectively. The ODE function had $500$ units in each of the $3$ layers. A $15$ dimensional latent state was used for the Human Activity dataset.  For the Physionet experiments the ODE function has $3$ layers with $50$ units each. The classifier consisted of $2$ layers with $300$ units each.

In the case of irregularly sampled timeseries with Latent ODEs, the number of function evaluations is not a major concern since the training time is quite small, so we compare only in terms of accuracy. It can be seen in \tabref{tab:latent_ode_human_activity} and \tabref{tab:mujoco_results} that STEER regularization consistently improves the performance of ODE based models.  \tabref{tab:mujoco_results} demonstrates that the extrapolation results are much better with STEER regularization, indicating that it may help improve the generalization of the model. However in the case of Physionet \tabref{tab:latent_ode_physionet}, the results do not improve over standard Neural ODEs, although the performance remains very close. 

\begin{table}
    \centering
        \begin{tabular}{l c c c c}
        \toprule
        & NODE & NODE(STEER) & ANODE & ANODE(STEER) \\ \midrule
        MNIST & $97.2\% \pm 0.2$ & $98.3\% \pm 0.3 $& $98.2\% \pm 0.1$ & $\mathbf{98.6\% \pm 0.3} $ \\
        CIFAR-10 & $53.7\% \pm 0.4$ & $54.9\% \pm 0.5$ & $60.6\% \pm 0.4$ & $\mathbf{62.1\% \pm 0.2}$      \\
        SVHN &  $81.0\% \pm 0.6$ & $81.8\% \pm 0.3$  & $83.5\% \pm 0.5$  & $\mathbf{84.1 \pm 0.2}$\\
        \bottomrule %
        \end{tabular}
    \caption{Test accuracies on Feedforward models on various datsets.}
    \label{tab:recognition_accuracies}
\end{table}
\subsection{Feedforward Models}
In feedforward learning for classification and regression, the Neural ODE framework can be used to map the input data $\mathbf{x} = \mathbf{z}(t_0) \in \mathbb{R}^d$ to features $\mathbf{z}(t) \in \mathbb{R}^d$ at any given time $t$. For standard neural ODEs, one integrates in the range $(t_0,t_1)$, and for STEER the integration is performed in the range $(t_0,T)$ where $T \in \text{Uniform}(t_1-b,t_1+b)$. Once the features $\mathbf{z}(T)$ at the final time $T$ are obtained, a linear layer $\mathbb{R}^d \rightarrow \mathbb{R}$ is used to map the features to the correct dimension for downstream tasks such as classification or regression.

We evaluate STEER on a similar set of experiments as performed in \cite{dupont2019augmented} on the MNIST \cite{lecun1990handwritten}, CIFAR-10 \cite{krizhevsky2009learning} and SVHN \cite{Netzer11SVHN} datasets using convolutional architectures for $f(\mathbf{z}(t), t)$. We use the same architectures and experimental settings as used in \cite{dupont2019augmented} for both Neural ODEs (NODE) and Augmented Neural ODEs (ANODE). 
For the MNIST experiments, $92$ filters were used for NODEs while $64$ filters were used for ANODEs to compensate for $5$ augmented dimensions. For the CIFAR-10 and SVHN experiments, $125$ filters were used for NODEs while $64$ filters were used for ANODEs with $10$ augmented dimensions. Such a design was employed to ensure a roughly similar number of parameters for a fair comparison.

\tabref{tab:recognition_accuracies} demonstrates the performance of STEER regularization with $b=0.99$. STEER provides a small boost in accuracy and performs consistently across all datasets.

\subsection{A Toy Example of Stiff ODE}

In this experiment, we analyze a toy example of stiff ODE from \cite{chapra2010numerical}. The aim of this experiment is to highlight the vulnerability of Neural ODEs to stiffness. The experimental setting for section 4.4 was similar to the one used in the official github repository \footnote{https://github.com/rtqichen/torchdiffeq/blob/master/examples/ode\_demo.py} of Neural ODE. Instead of using a 2-D ODE we used a 1-D ODE to be able to better visualize and analyze the behavior. We used Eq 12 in the paper as mentioned in the numerical methods book by \cite{chapra2010numerical} on page 752. It was surprising that a Neural ODE was able to model a 2 dimensional model with ease while the stiff equation was extremely hard for the Neural ODE to model.

The ODE is given by 
\begin{equation}
    \frac{dy}{dt}= -1000y + 3000 - 2000e^{-t},
    \label{eq:stiff}
\end{equation}

\begin{wrapfigure}[12]{r}{0.55\textwidth}
\vspace{-0.1in}
    \centering
    \begin{tabular}{*{4}{c@{\hspace{3px}}}}
    \includegraphics[width=0.48\linewidth]{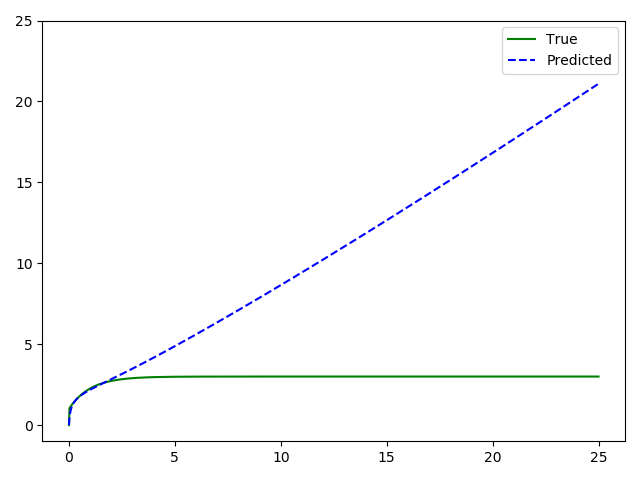} &
    \includegraphics[width=0.48\linewidth]{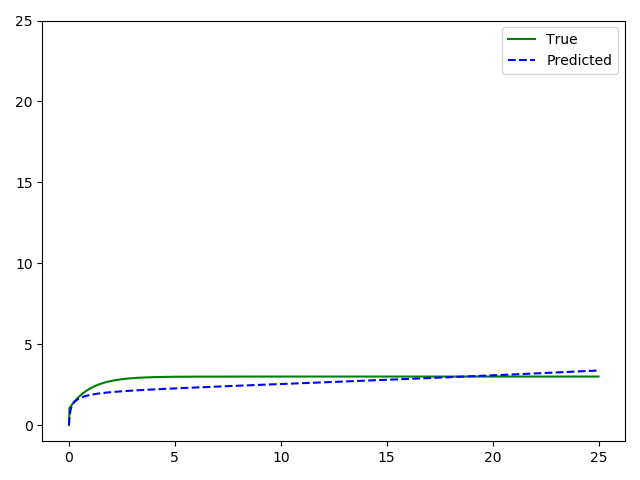}
    \\
\end{tabular}
    \caption{Standard Neural ODE struggles while STEER regularization helps to fit a stiff ODE with stiffness ratio $r=1000$.}
    \label{fig:toy_steer}
\end{wrapfigure}

with the initial condition of $y(0)=0$. 
The solution for the above stiff ODE is  $y = 3 - 0.998e^{-1000t} -2.002 e^{-t}$.
We use a simple architecture with a single hidden layer of dimension $500$ for both the standard Neural ODE and the one with STEER. The effective input dimension is $2$ resulting by concatenating $y$ and $t$, and the output dimension is $1$.

\begin{wrapfigure}[18]{r}{0.55\textwidth}
\vspace{-0.3in} 
\centering
    \includegraphics[width=0.55\textwidth]{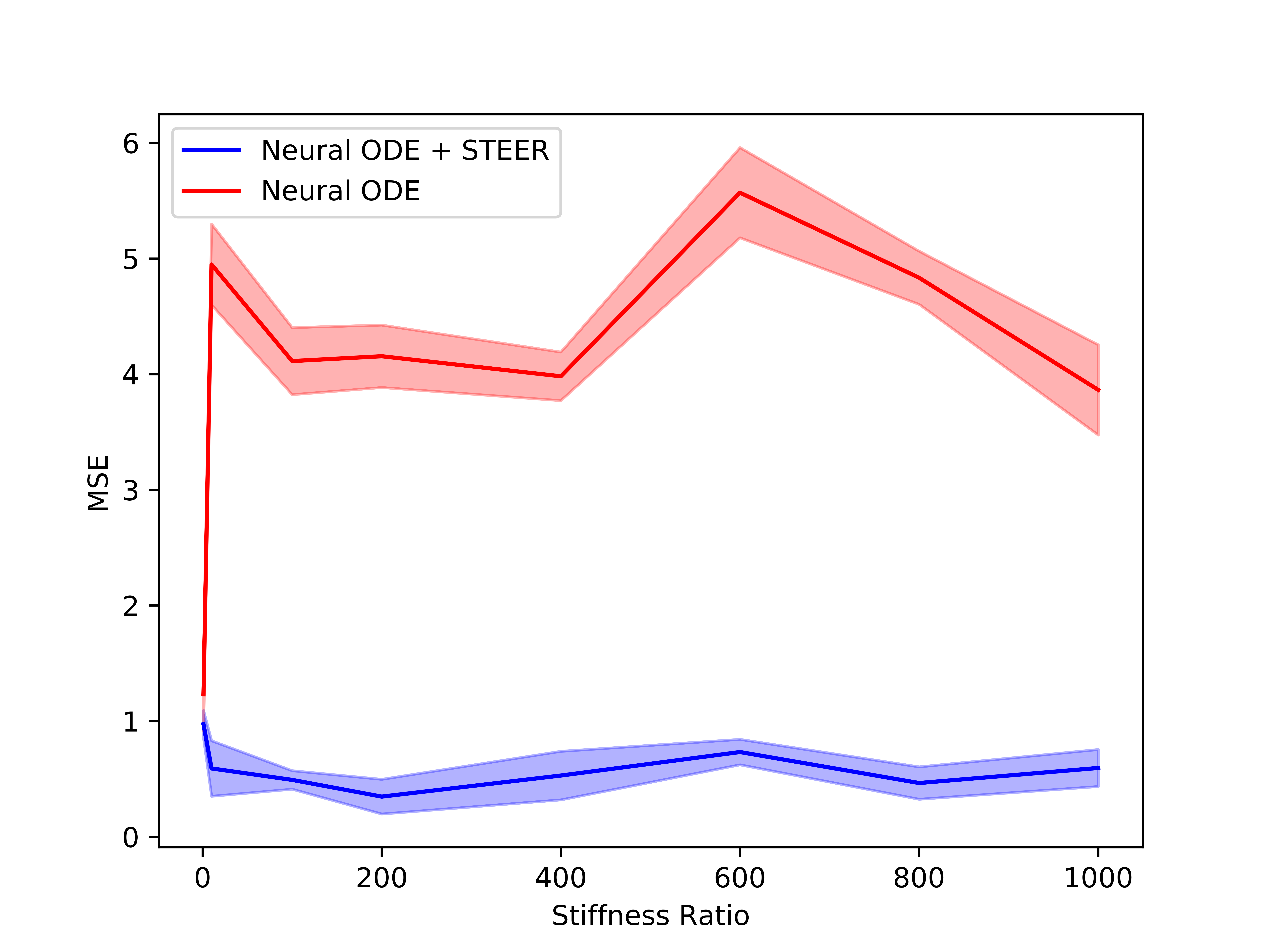}
    \caption{Model with STEER regularization achieves a much lower MSE on a Stiff ODE as the stiffness ratio increases.} 
    \label{fig:stiffness_comparisons}
\end{wrapfigure}

As shown in \figref{fig:toy_steer}, the solution is initially dominated by the fast exponential term $e^{-1000t}$. After a short period $(t<0.005)$, this transient dies down and the solution is dictated by the slow exponential $e^{-t}$. 
The training interval $[0,15]$ is broken down into small intervals of length $(t_1-t_0)=0.125$. We sample 1000 random initial points $t_0$ and use $y(t_0)$ as the initialization and then try to estimate the value of the function $y(t_1)$ at $t_1=t_0+0.125$.
It can be seen in \figref{fig:toy_steer} that standard Neural ODE doesn't show the same asymptotic behavior as the actual solution, whereas STEER regularization enables better asymptotic behavior. The minimum Mean Squared Error(MSE) loss over all the points of the trajectory in the test interval $[0,25]$ for standard Neural ODE is $3.87$, and with STEER regularization with $b=0.124$ it is $0.52$.

To further study the behaviour with varying stiffness ratio, the formula in \eqnref{eq:stiff} is generalized as $\frac{dy}{dt}= -ry + 3r - 2re^{-t}$. The generalized equation has the same asymptotic behavior as the original. It also reaches a steady state at $y=3$. The $-ry$ term would introduce a term of $e^{-rt}$ and another term of $e^{-t}$ would be present in the solution. Thus the stiffness ratio as described in Subsection \ref{subsection:stiff_ode} is exactly $r$. The plot in \figref{fig:stiffness_comparisons} shows that as the stiffness increases, the STEER regularization compares better in terms of MSE to models without the regularization. The experiments were performed using the dormand-prince \cite{dormand1980family} ODE solver.

While we do not promise to have proposed a solution for the difficult problem of Stiff-ODEs \cite{chapra2010numerical}, we empirically observed that it gives improvement on toy examples. We want to highlight that further research is needed in this important area of Stiffness.
We show analysis about the choice of $b$ for our technique in the appendix.

\subsection{A Toy example of a Timeseries Model:}

In this experiment, we analyze a toy example of irregularly sampled timeseries models. The experimental setting was the same as in \cite{rubanova2019latent}. The models were tested on a toy dataset of periodic trajectories. The initial point was sampled from a standard Gaussian. Gaussian noise was added to the subsequent observations. Each trajectory has a 100 observations. A fixed subset of these points from each trajectory are sampled. The entire set of 100 observations are attempted to be reconstructed. In the original experiment by \cite{rubanova2019latent} the amplitude was the same throughout the trajectories while the frequency was varied for the $A . sin(\phi t)$ where $\phi$ is the frequency of the sine component. We made a simple change to the experimental setting whereby we also added a $t$ component to the sampled periodic trajectories. Thus the final form of the trajectories being sampled are $A . sin(\phi t) + t $ where $A$ is constant and $\phi$ can vary within the sampled trajectories. In this experimental setting we see that standard Neural ODEs start to produce trajectories which are qualitatively quite different from the actual sampled trajectories. STEER on the other hand produces trajectories which match the qualitative behavior of the sampled trajectories more closely. 

\begin{figure}

    \centering
    \begin{tabular} {c} 
    
    \frame{\includegraphics[width=0.9\linewidth]{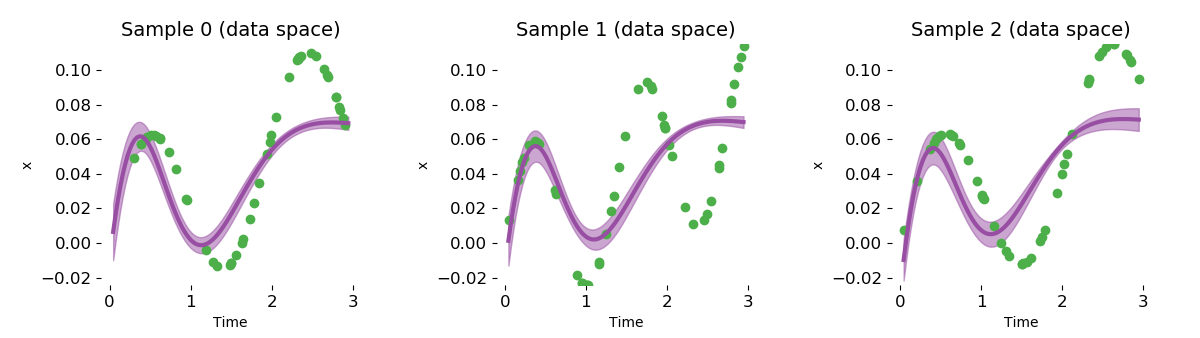}}
    \\
    \frame{\includegraphics[width=0.9\linewidth]{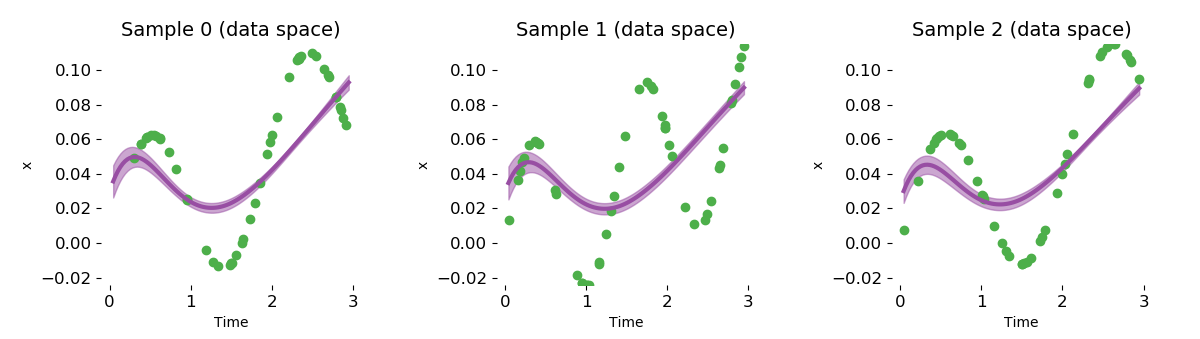}}
\end{tabular}
    \caption{A simple addition of $t$ to $A . sin(\phi t)$ changes the qualitative performance of Neural ODE vs STEER. The top row demonstrates  the 3 sample trajectories that have been irregularly sampled using STEER, while the bottom row is with a standard Neural ODE. The green points indicate the irregularly sampled points, while the rest of the points are predicted using each trained model. The amplitude $A$ of the model is the same across the various samples but the frequency $\phi$ is variable across the samples. As we see from the sampled trajectories STEER models the qualitative behavior of the trajectories better than standard Neural ODEs. }
    \label{fig:toy_timeseries}
\end{figure}

\section{Conclusion}
In this paper we introduced a simple temporal regularization method for Neural ODEs. The regularization consists of perturbing the final time of the integration of the ODE solver. 
The main advantage of our work is the empirical evidence showing the regularization results in simpler dynamics for Neural ODEs models on a wide variety of tasks. This is demonstrated for a specific stiff ODE that is challenging for a basic Neural ODE solver, faster continuous normalizing flow models and improved or comparable accuracy for time series models and deep feedforward networks. We further show that our approach is orthogonal to other regularization approaches and can be thus combined with them to further improve Neural ODEs.

\section{Broader Impact}


Neural ODEs operate under the broad umbrella of `Implicit Methods in Deep Learning'. The last decade in deep learning was characterized by layer based deep neural networks. The limits of that framework are starting to approach whereby one can fit only so many layers in the GPU of a computer. Thus, alternate forms of computational frameworks are emerging whereby the function is not explicitly modeled. Rather a pseudo function is learnt from which the original function that needed to be approximated can be retrieved. This shows a different model of computation and previous work has shown signs that it is much more parameter efficient than resnet based deep neural networks. 

Our work is in the broad area of Neural ODEs which provides continuous depth neural networks. This line of work would have a similar impact as most supervised deep learning techniques. However,current continuous depth neural networks show limitations that are improved through our proposed regularization scheme. Thus, our models converge faster during training by reducing the number of function evaluations and effective training time. The resulting trained models also require fewer function evaluations at inference, thus reducing the computation at test time too. Our research could potentially encourage the search for more such efficient techniques for training neural ODEs. 

Although we have highlighted some drawbacks of existing Neural ODEs in the context of Stiff ODEs. We do not fully understand why our solution works better in the case of stiff ODEs. The issue of stiff ODEs has a rich history of research in engineering. Inevitably during training, there is a possibility that the dynamics learnt by the neural network might be an instance of a stiff equation. This cannot be prevented with the current formulation. As Neural ODEs become more stable and mature we need to understand the effects of stiffness. These models can be truly useful if we understand it from all possible points of failure. 

\begin{ack}
Arnab Ghosh was funded by the University of Oxford through a studentship using combined corporate gifts from Microsoft and Technicolor. Harkirat Singh Behl was supported using a Tencent studentship through the University of Oxford. Emilien Dupont was funded by the University of Oxford through a Deepmind studentship. Philip H.S. Torr was supported by the Royal Academy of Engineering under the Research Chair and Senior Research Fellowships scheme, EPSRC/MURI grant EP/N019474/1 and FiveAI. Vinay Namboodiri was funded using Startup funding support from University of Bath.
\end{ack}

{
\bibliography{src/egbib}

\begin{thebibliography}{}

\bibitem[Avelin and Nystr{\"o}m, 2019]{avelin2019neural}
Avelin, B. and Nystr{\"o}m, K. (2019).
\newblock Neural odes as the deep limit of resnets with constant weights.
\newblock {\em arXiv preprint arXiv:1906.12183}.

\bibitem[Brugnano et~al., 2011]{brugnano2011fifty}
Brugnano, L., Mazzia, F., and Trigiante, D. (2011).
\newblock Fifty years of stiffness.
\newblock In {\em Recent Advances in Computational and Applied Mathematics},
  pages 1--21. Springer.

\bibitem[Chapra et~al., 2010]{chapra2010numerical}
Chapra, S.~C., Canale, R.~P., et~al. (2010).
\newblock {\em Numerical methods for engineers}.
\newblock Boston: McGraw-Hill Higher Education,.

\bibitem[Chen et~al., 2018]{chen2018neural}
Chen, T.~Q., Rubanova, Y., Bettencourt, J., and Duvenaud, D.~K. (2018).
\newblock Neural ordinary differential equations.
\newblock In {\em Advances in neural information processing systems}, pages
  6571--6583.

\bibitem[Cuchiero et~al., 2019]{cuchiero2019deep}
Cuchiero, C., Larsson, M., and Teichmann, J. (2019).
\newblock Deep neural networks, generic universal interpolation, and controlled
  odes.
\newblock {\em arXiv preprint arXiv:1908.07838}.

\bibitem[Curtiss and Hirschfelder, 1952]{curtiss1952integration}
Curtiss, C.~F. and Hirschfelder, J.~O. (1952).
\newblock Integration of stiff equations.
\newblock {\em Proceedings of the National Academy of Sciences of the United
  States of America}, 38(3):235.

\bibitem[Dahlquist, 1976]{dahlquist1976error}
Dahlquist, G. (1976).
\newblock Error analysis for a class of methods for stiff non-linear initial
  value problems.
\newblock In {\em Numerical analysis}, pages 60--72. Springer.

\bibitem[Dahlquist, 1985]{dahlquist198533}
Dahlquist, G. (1985).
\newblock 33 years of numerical instability, part i.
\newblock {\em BIT Numerical Mathematics}, 25(1):188--204.

\bibitem[Davis et~al., 2020]{davis2020time}
Davis, J.~Q., Choromanski, K., Varley, J., Lee, H., Slotine, J.-J.,
  Likhosterov, V., Weller, A., Makadia, A., and Sindhwani, V. (2020).
\newblock Time dependence in non-autonomous neural odes.
\newblock {\em arXiv preprint arXiv:2005.01906}.

\bibitem[Deng et~al., 2009]{Deng09ImageNet}
Deng, J., Dong, W., Socher, R., Li, L.-J., Li, K., and Fei-Fei, L. (2009).
\newblock {ImageNet: A Large-Scale Hierarchical Image Database}.

\bibitem[Dinh et~al., 2016]{dinh2016density}
Dinh, L., Sohl-Dickstein, J., and Bengio, S. (2016).
\newblock Density estimation using real nvp.
\newblock {\em arXiv preprint arXiv:1605.08803}.

\bibitem[Dormand and Prince, 1980]{dormand1980family}
Dormand, J.~R. and Prince, P.~J. (1980).
\newblock A family of embedded runge-kutta formulae.
\newblock {\em Journal of computational and applied mathematics}, 6(1):19--26.

\bibitem[Du et~al., 2020]{du2020model}
Du, J., Futoma, J., and Doshi-Velez, F. (2020).
\newblock Model-based reinforcement learning for semi-markov decision processes
  with neural odes.
\newblock {\em arXiv preprint arXiv:2006.16210}.

\bibitem[Dupont et~al., 2019]{dupont2019augmented}
Dupont, E., Doucet, A., and Teh, Y.~W. (2019).
\newblock Augmented neural odes.
\newblock In {\em Advances in Neural Information Processing Systems}, pages
  3134--3144.

\bibitem[Farrell et~al., 2013]{farrell2013automated}
Farrell, P.~E., Ham, D.~A., Funke, S.~W., and Rognes, M.~E. (2013).
\newblock Automated derivation of the adjoint of high-level transient finite
  element programs.
\newblock {\em SIAM Journal on Scientific Computing}, 35(4):C369--C393.

\bibitem[Finlay et~al., 2020]{finlay2020train}
Finlay, C., Jacobsen, J.-H., Nurbekyan, L., and Oberman, A.~M. (2020).
\newblock How to train your neural ode.
\newblock In {\em International Conference in Machine Learning}.

\bibitem[Ghosh et~al., 2019]{ghosh2019interactive}
Ghosh, A., Zhang, R., Dokania, P.~K., Wang, O., Efros, A.~A., Torr, P.~H., and
  Shechtman, E. (2019).
\newblock Interactive sketch \& fill: Multiclass sketch-to-image translation.
\newblock In {\em Proceedings of the IEEE International Conference on Computer
  Vision}, pages 1171--1180.

\bibitem[Grathwohl et~al., 2018]{grathwohl2018ffjord}
Grathwohl, W., Chen, R.~T., Bettencourt, J., Sutskever, I., and Duvenaud, D.
  (2018).
\newblock Ffjord: Free-form continuous dynamics for scalable reversible
  generative models.
\newblock {\em arXiv preprint arXiv:1810.01367}.

\bibitem[Haber and Ruthotto, 2017]{haber2017stable}
Haber, E. and Ruthotto, L. (2017).
\newblock Stable architectures for deep neural networks.
\newblock {\em Inverse Problems}, 34(1):014004.

\bibitem[Hanshu et~al., 2019]{hanshu2019robustness}
Hanshu, Y., Jiawei, D., Vincent, T., and Jiashi, F. (2019).
\newblock On robustness of neural ordinary differential equations.
\newblock In {\em International Conference on Learning Representations}.

\bibitem[He et~al., 2016]{he2016deep}
He, K., Zhang, X., Ren, S., and Sun, J. (2016).
\newblock Deep residual learning for image recognition.
\newblock In {\em Proceedings of the IEEE conference on computer vision and
  pattern recognition}, pages 770--778.

\bibitem[Hodgkinson et~al., 2020]{hodgkinson2020stochastic}
Hodgkinson, L., van~der Heide, C., Roosta, F., and Mahoney, M.~W. (2020).
\newblock Stochastic normalizing flows.
\newblock {\em arXiv preprint arXiv:2002.09547}.

\bibitem[Huang et~al., 2016]{huang2016deep}
Huang, G., Sun, Y., Liu, Z., Sedra, D., and Weinberger, K.~Q. (2016).
\newblock Deep networks with stochastic depth.
\newblock In {\em European conference on computer vision}, pages 646--661.
  Springer.

\bibitem[Ioffe and Szegedy, 2015]{ioffe2015batch}
Ioffe, S. and Szegedy, C. (2015).
\newblock Batch normalization: Accelerating deep network training by reducing
  internal covariate shift.
\newblock {\em arXiv preprint arXiv:1502.03167}.

\bibitem[Jabir et~al., 2019]{jabir2019mean}
Jabir, J.-F., {\v{S}}i{\v{s}}ka, D., and Szpruch, {\L}. (2019).
\newblock Mean-field neural odes via relaxed optimal control.
\newblock {\em arXiv preprint arXiv:1912.05475}.

\bibitem[Jia and Benson, 2019]{jia2019neural}
Jia, J. and Benson, A.~R. (2019).
\newblock Neural jump stochastic differential equations.
\newblock In {\em Advances in Neural Information Processing Systems}, pages
  9843--9854.

\bibitem[Kanaa et~al., 2019]{kanaasimple}
Kanaa, D., Voleti, V., Kahou, S., and Pal, C. (2019).
\newblock Simple video generation using neural odes.

\bibitem[Kelly et~al., 2020]{kelly2020learning}
Kelly, J., Bettencourt, J., Johnson, M.~J., and Duvenaud, D. (2020).
\newblock Learning differential equations that are easy to solve.
\newblock {\em arXiv preprint arXiv:2007.04504}.

\bibitem[Kingma and Dhariwal, 2018]{kingma2018glow}
Kingma, D.~P. and Dhariwal, P. (2018).
\newblock Glow: Generative flow with invertible 1x1 convolutions.
\newblock In {\em Advances in Neural Information Processing Systems}, pages
  10215--10224.

\bibitem[Krizhevsky et~al., 2009]{krizhevsky2009learning}
Krizhevsky, A., Hinton, G., et~al. (2009).
\newblock Learning multiple layers of features from tiny images.

\bibitem[Lagaris et~al., 1998]{lagaris1998artificial}
Lagaris, I.~E., Likas, A., and Fotiadis, D.~I. (1998).
\newblock Artificial neural networks for solving ordinary and partial
  differential equations.
\newblock {\em IEEE transactions on neural networks}, 9(5):987--1000.

\bibitem[LeCun et~al., 1990]{lecun1990handwritten}
LeCun, Y., Boser, B.~E., Denker, J.~S., Henderson, D., Howard, R.~E., Hubbard,
  W.~E., and Jackel, L.~D. (1990).
\newblock Handwritten digit recognition with a back-propagation network.
\newblock In {\em Advances in neural information processing systems}, pages
  396--404.

\bibitem[Li et~al., 2020a]{li2020scalable}
Li, X., Wong, T.-K.~L., Chen, R.~T., and Duvenaud, D. (2020a).
\newblock Scalable gradients for stochastic differential equations.
\newblock {\em arXiv preprint arXiv:2001.01328}.

\bibitem[Li et~al., 2020b]{li2020exchangeable}
Li, Y., Yi, H., Bender, C.~M., Shan, S., and Oliva, J.~B. (2020b).
\newblock Exchangeable neural ode for set modeling.
\newblock {\em arXiv preprint arXiv:2008.02676}.

\bibitem[Liu et~al., 2019]{liu2019neural}
Liu, X., Xiao, T., Si, S., Cao, Q., Kumar, S., and Hsieh, C.-J. (2019).
\newblock Neural sde: Stabilizing neural ode networks with stochastic noise.
\newblock {\em arXiv preprint arXiv:1906.02355}.

\bibitem[Lu et~al., 2017]{lu2017beyond}
Lu, Y., Zhong, A., Li, Q., and Dong, B. (2017).
\newblock Beyond finite layer neural networks: Bridging deep architectures and
  numerical differential equations.
\newblock {\em arXiv preprint arXiv:1710.10121}.

\bibitem[Massaroli et~al., 2020a]{massaroli2020stable}
Massaroli, S., Poli, M., Bin, M., Park, J., Yamashita, A., and Asama, H.
  (2020a).
\newblock Stable neural flows.
\newblock {\em arXiv preprint arXiv:2003.08063}.

\bibitem[Massaroli et~al., 2020b]{massaroli2020dissecting}
Massaroli, S., Poli, M., Park, J., Yamashita, A., and Asama, H. (2020b).
\newblock Dissecting neural odes.
\newblock {\em arXiv preprint arXiv:2002.08071}.

\bibitem[Netzer et~al., 2011]{Netzer11SVHN}
Netzer, Y., Wang, T., Coates, A., Bissacco, A., Wu, B., and N, A.~Y. (2011).
\newblock Reading digits in natural images with unsupervised feature learning.

\bibitem[Norcliffe et~al., 2020]{norcliffe2020second}
Norcliffe, A., Bodnar, C., Day, B., Simidjievski, N., and Li{\`o}, P. (2020).
\newblock On second order behaviour in augmented neural odes.
\newblock {\em arXiv preprint arXiv:2006.07220}.

\bibitem[Oganesyan et~al., 2020]{oganesyan2020stochasticity}
Oganesyan, V., Volokhova, A., and Vetrov, D. (2020).
\newblock Stochasticity in neural odes: An empirical study.
\newblock {\em arXiv preprint arXiv:2002.09779}.

\bibitem[Papamakarios et~al., 2019]{papamakarios2019normalizing}
Papamakarios, G., Nalisnick, E., Rezende, D.~J., Mohamed, S., and
  Lakshminarayanan, B. (2019).
\newblock Normalizing flows for probabilistic modeling and inference.
\newblock {\em arXiv preprint arXiv:1912.02762}.

\bibitem[Papamakarios et~al., 2017]{papamakarios2017masked}
Papamakarios, G., Pavlakou, T., and Murray, I. (2017).
\newblock Masked autoregressive flow for density estimation.
\newblock In {\em Advances in Neural Information Processing Systems}, pages
  2338--2347.

\bibitem[Pearlmutter, 1995]{pearlmutter1995gradient}
Pearlmutter, B.~A. (1995).
\newblock Gradient calculations for dynamic recurrent neural networks: A
  survey.
\newblock {\em IEEE Transactions on Neural networks}, 6(5):1212--1228.

\bibitem[Rackauckas and Nie, 2017]{rackauckas2017differentialequations}
Rackauckas, C. and Nie, Q. (2017).
\newblock Differentialequations. jl--a performant and feature-rich ecosystem
  for solving differential equations in julia.
\newblock {\em Journal of Open Research Software}, 5(1).

\bibitem[Rubanova et~al., 2019]{rubanova2019latent}
Rubanova, Y., Chen, T.~Q., and Duvenaud, D.~K. (2019).
\newblock Latent ordinary differential equations for irregularly-sampled time
  series.
\newblock In {\em Advances in Neural Information Processing Systems}, pages
  5321--5331.

\bibitem[Ruthotto and Haber, 2019]{ruthotto2019deep}
Ruthotto, L. and Haber, E. (2019).
\newblock Deep neural networks motivated by partial differential equations.
\newblock {\em Journal of Mathematical Imaging and Vision}, pages 1--13.

\bibitem[Silva et~al., 2012]{silva2012predicting}
Silva, I., Moody, G., Scott, D.~J., Celi, L.~A., and Mark, R.~G. (2012).
\newblock Predicting in-hospital mortality of icu patients: The
  physionet/computing in cardiology challenge 2012.
\newblock In {\em 2012 Computing in Cardiology}, pages 245--248. IEEE.

\bibitem[S{\"o}derlind et~al., 2015]{soderlind2015stiffness}
S{\"o}derlind, G., Jay, L., and Calvo, M. (2015).
\newblock Stiffness 1952--2012: Sixty years in search of a definition.
\newblock {\em BIT Numerical Mathematics}, 55(2):531--558.

\bibitem[Veit et~al., 2016]{veit2016residual}
Veit, A., Wilber, M.~J., and Belongie, S. (2016).
\newblock Residual networks behave like ensembles of relatively shallow
  networks.
\newblock In {\em NIPS}, pages 550--558.

\bibitem[Wang et~al., 2019]{wang2019resnets}
Wang, B., Shi, Z., and Osher, S. (2019).
\newblock Resnets ensemble via the feynman-kac formalism to improve natural and
  robust accuracies.
\newblock In {\em Advances in Neural Information Processing Systems}, pages
  1655--1665.

\bibitem[Wanner and Hairer, 1996]{wanner1996solving}
Wanner, G. and Hairer, E. (1996).
\newblock {\em Solving ordinary differential equations II}.
\newblock Springer Berlin Heidelberg.

\bibitem[Weinan, 2017]{weinan2017proposal}
Weinan, E. (2017).
\newblock A proposal on machine learning via dynamical systems.
\newblock {\em Communications in Mathematics and Statistics}, 5(1):1--11.

\bibitem[Wu et~al., 2020]{wu2020stochastic}
Wu, H., K{\"o}hler, J., and No{\'e}, F. (2020).
\newblock Stochastic normalizing flows.
\newblock {\em arXiv preprint arXiv:2002.06707}.

\bibitem[Xhonneux et~al., 2019]{xhonneux2019continuous}
Xhonneux, L.-P.~A., Qu, M., and Tang, J. (2019).
\newblock Continuous graph neural networks.
\newblock {\em arXiv preprint arXiv:1912.00967}.

\bibitem[Yao et~al., 2007]{yao2007early}
Yao, Y., Rosasco, L., and Caponnetto, A. (2007).
\newblock On early stopping in gradient descent learning.
\newblock {\em Constructive Approximation}, 26(2):289--315.

\bibitem[Yildiz et~al., 2019]{yildiz2019ode2vae}
Yildiz, C., Heinonen, M., and Lahdesmaki, H. (2019).
\newblock Ode2vae: Deep generative second order odes with bayesian neural
  networks.
\newblock In {\em Advances in Neural Information Processing Systems}, pages
  13412--13421.

\bibitem[Zhang et~al., 2019]{zhang2019approximation}
Zhang, H., Gao, X., Unterman, J., and Arodz, T. (2019).
\newblock Approximation capabilities of neural ordinary differential equations.
\newblock {\em arXiv preprint arXiv:1907.12998}.

\end{thebibliography}
\bibliographystyle{apalike}
}

\newpage
\appendix
\addcontentsline{toc}{section}{Appendices}
\section*{Appendix}

\section{Modified Picard's Iteration:}
 Let the differential equation be

\begin{equation}
    \frac{d \phi }{dt} = f(t,\phi(t)) ,  \; \; \; \phi(t_0) = z_0.
\end{equation}
The modified Picard's iteration can be formulated as
\begin{subequations}
    \begin{align}
        \phi_0(t) &= z_0 , \label{eq:app_picard_1}\\ 
        \phi_{k+1}(t) &= z_0 + \int_{t_0}^{t + \delta_{k+1}}f(s,\phi_k(s)) ds, \label{eq:app_picard_2}\\
        \delta_{k+1} &\sim \text{Uniform}(-b,b) \label{eq:app_picard_3}.
    \end{align}
\end{subequations}

\begin{figure}[ht!]
    \centering  
    \includegraphics[trim=0cm 8cm 0cm 6cm, clip=true,width=\linewidth]{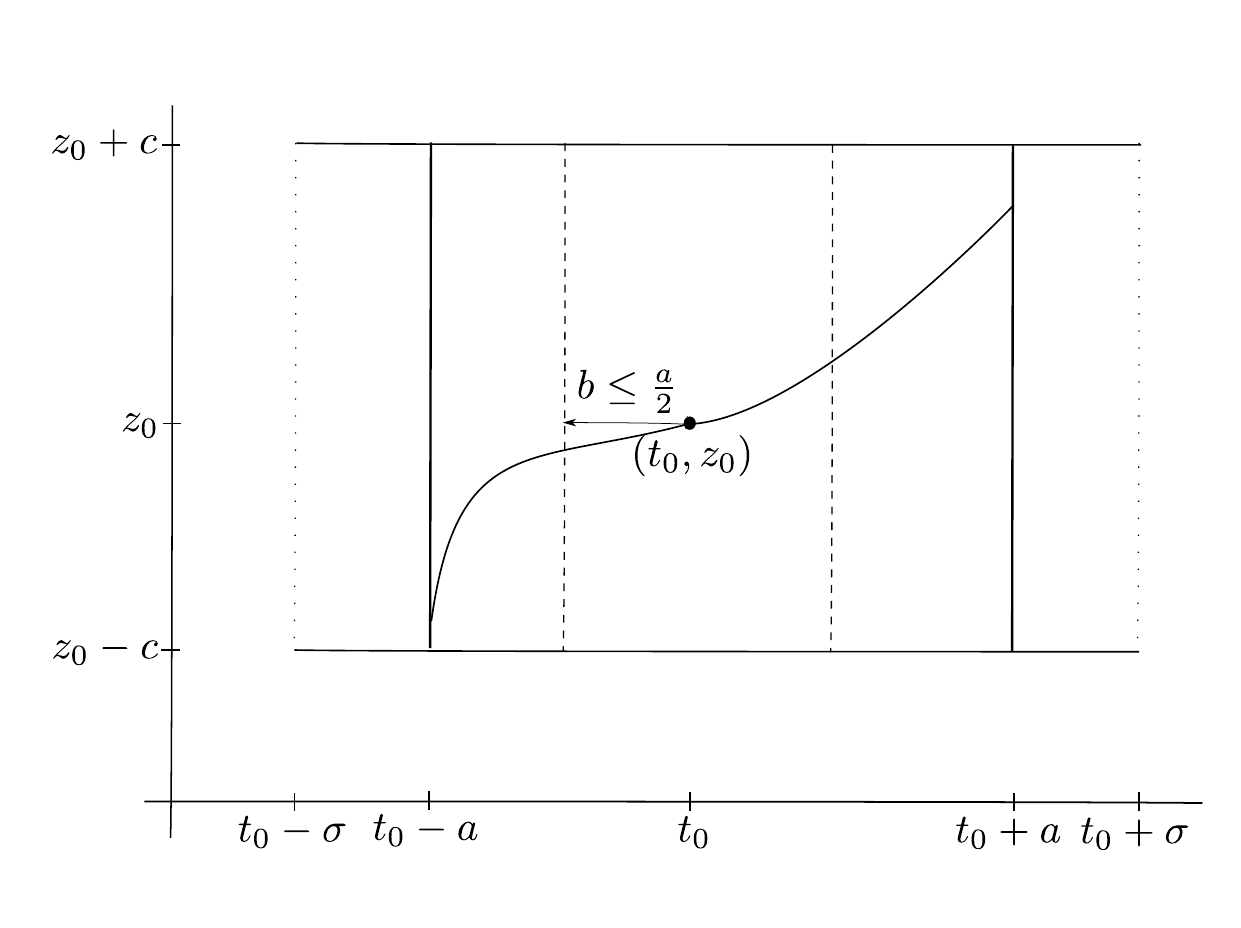} 
    \caption{Conditions of the proof.}
    \label{fig:proof_viz}
\end{figure}

Although the modified Picard's iteration is very close to our technique, it is not the exact same process that is simulated by our proposed technique. There are subtle differences which make the proposed technique different to the one simulated by the modified Picard's iteration. Picard's iteration constructs a sequence of approximate functions $\{ \phi_k(t) \}$ which eventually converge to the desired solution. \eqnref{eq:app_picard_1} defines the initial approximation $\phi_0(t)$ as the initial condition $z_0$ of the ODE. \eqnref{eq:app_picard_2} describes the recurrence relation that relates $\phi_{k+1}(t)$ to $\phi_{k}(t)$. The recurrence relation adds a $\delta_{k+1}$ (\eqnref{eq:app_picard_3}) term which is randomly sampled. 

It can be seen that the right-hand side \eqnref{eq:app_picard_2} defines an operator that maps a function $\phi$ to a function $T[\phi]$ as

\begin{equation}
    T[\phi_k](t) = \phi_0 + \int_{t_0}^{t + \delta} f(s,\phi_k) ds , \; \; \; \delta \sim \text{Uniform}(-b,b).
    \label{picard_operator}
\end{equation}

The following theorem shows the existence of a unique solution for the modified Picard's iteration for our method. 
\begin{theorem}
Suppose that $f: \mathbb{R}^{2} \rightarrow \mathbb{R}$ satisfies the Lipschitz  condition   $| f(t,x_2) - f(t,x_1) | \leq L |x_2-x_1|$ where $ 0 \leq L < \infty$. Suppose that $f$ is continuous and $\exists \; M$  such that $0\leq M <\infty , \; |f (t, x)| \leq M ,\; \forall(t,x)$. 
Then the sequence $\{\phi_k \}$ generated by the iteration  $\phi_{k+1}=T[\phi_k], \phi_0(t) = z_0$ is a contraction in expectation. 

The sequence $ \{ \phi_k(t) \} $ converges to a unique fixed point $\phi^*(t)$.
\label{theorem:picard}
\end{theorem}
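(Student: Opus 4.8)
The plan is to recast the iteration as a fixed-point problem for the operator $T$ of \eqnref{picard_operator} on the Banach space $C(I)$ of continuous functions on a compact interval $I$ around $t_0$, equipped with the supremum norm $\norm{\cdot}_\infty$, and to run a contraction-mapping argument adapted to the randomness in $\delta$. Because the upper limit of integration is $t+\delta$ rather than $t$, I would first fix the geometry: restrict the free variable to $t \in [t_0, t_0+\alpha]$ and take $I = [t_0-b,\, t_0+\alpha+b]$ so that every integral $\int_{t_0}^{t+\delta}$ with $\delta \in [-b,b]$ stays inside the domain on which $f$ is controlled. The uniform bound $|f|\le M$ then guarantees that each $T[\phi]$ is well defined and Lipschitz in $t$, hence lands back in $C(I)$, so $T$ maps the space to itself.

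The core estimate is the contraction in expectation. Comparing two inputs under a common draw of $\delta$ and using \eqnref{eq:app_picard_2}, I would write $T[\phi](t) - T[\psi](t) = \int_{t_0}^{t+\delta}\bigl(f(s,\phi(s)) - f(s,\psi(s))\bigr)\,ds$, so the Lipschitz hypothesis gives the pointwise bound $|T[\phi](t) - T[\psi](t)| \le L\,|t+\delta-t_0|\,\norm{\phi-\psi}_\infty$. Taking the supremum over $t$ and then the expectation over $\delta$, and using $\E[\delta]=0$ to cancel the first-order contribution of the perturbation, I would obtain $\E_\delta\norm{T[\phi]-T[\psi]}_\infty \le \lambda\,\norm{\phi-\psi}_\infty$ with $\lambda := L\,\E_\delta\!\left[\sup_t |t+\delta-t_0|\right]$. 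Choosing $\alpha$ small and restricting $b$ (e.g. $b\le \alpha/2$) forces $\lambda = L\alpha < 1$; alternatively, iterating the bound to produce the usual factorial decay $(L\alpha)^k/k!$, or switching to a Bielecki (exponentially weighted) norm, removes the need for $\alpha$ to be small and extends the conclusion to any finite interval. Either route is exactly the claim that $T$ is a contraction in expectation.

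The fixed-point and convergence claims then follow from a Banach-type argument, and here lies the main obstacle. Under a single fixed realization of $\delta$, $T$ is a genuine deterministic contraction, so the Banach fixed-point theorem yields a unique $\phi^*$ with geometric convergence and the only work is the estimate above. The genuinely stochastic reading of \eqnref{eq:app_picard_2}, with a fresh $\delta_{k+1}$ at every step, is more delicate: comparing $\phi_{k+1}=T[\phi_k]$ against $\phi_k=T[\phi_{k-1}]$ produces, beyond the Lipschitz term, a boundary term $\int_{t+\delta_k}^{t+\delta_{k+1}} f(s,\phi_{k-1}(s))\,ds$ controlled only by $M\,|\delta_{k+1}-\delta_k|$, whose expectation is a nonvanishing constant of order $Mb$. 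Thus the raw iterates need not be pathwise Cauchy, so ``convergence to a unique fixed point'' must be read with care. I would close this gap by passing to the averaged operator $\bar T[\phi] := \E_\delta\,T[\phi]$, showing by the same Lipschitz estimate that $\bar T$ is a deterministic contraction on $C(I)$, and invoking Banach for its unique fixed point $\phi^*$; using $\E[\delta]=0$ together with dominated convergence identifies $\phi^*$ with the solution of the unperturbed integral equation $\phi^*(t)=z_0+\int_{t_0}^{t} f(s,\phi^*(s))\,ds$, recovering the observation that the regularized scheme converges to the ordinary ODE solution. I expect the hardest and most subtle point to be precisely this reconciliation between the per-step injected noise and a single deterministic fixed point, and I would make explicit that it is the zero-mean property of the uniform perturbation that annihilates the first-order error and renders the averaged problem contractive.
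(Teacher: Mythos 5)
Your proposal is correct in its core estimate and takes a genuinely different --- and in fact more careful --- route than the paper's own proof. The paper (Lemma \ref{lemma:contraction}) compares $T\phi_1$ and $T\phi_2$ under two \emph{independent} draws $\delta_1,\delta_2$, splits the difference into a Lipschitz term on $[t_0,t]$ plus boundary pieces $\int_t^{t+\delta_i}f$, bounds those pieces by $M(|\delta_2|-|\delta_1|)$, and cancels this term in expectation using the triangular-distribution computation of Lemma \ref{lemma:diff_uniform_var}; Lemma \ref{lemma:sequence_convergence} then telescopes the factor $\tfrac12$ as if it held deterministically, and uniqueness (Lemma \ref{lemma:unique}) is claimed only ``with high probability.'' You instead (i) prove the contraction under a \emph{common} (coupled) draw of $\delta$, where the boundary terms cancel identically, and (ii) observe that for the true iteration, with a fresh $\delta_{k+1}$ at each step, the honest bound on the boundary mismatch is $M|\delta_{k+1}-\delta_k|$, whose expectation is of order $Mb$ and does not vanish, so the raw iterates need not be pathwise Cauchy; you then recover a bona fide fixed point by applying Banach to the averaged operator $\bar T[\phi]=\E_\delta T[\phi]$. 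Your diagnosis in (ii) pinpoints exactly the weak step in the paper's argument: $M(|\delta_2|-|\delta_1|)$ is not a valid majorant of the boundary difference (take $\phi_1=\phi_2$, $f\equiv M$ and $\delta_2=-\delta_1=d>0$; the difference equals $2Md$ while the paper's bound is $0$), and it is precisely this invalid bound that produces the zero-mean cancellation the paper relies on. Your route buys rigor --- a genuine deterministic contraction, a clean Banach fixed point, and an honest statement of the sense in which the random iterates ``converge'' --- at the price of reading the theorem's convergence claim as a statement about the averaged dynamics (equivalently, convergence of the iterates only up to an $O(Mb)$ neighborhood).

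One correction to your final step: the fixed point of $\bar T$ is \emph{not} the solution of the unperturbed integral equation, and $\E[\delta]=0$ does not yield this identification. Writing $G(t)=\int_{t_0}^{t}f(s,\phi(s))\,ds$, you have $\bar T[\phi](t)=z_0+\E_\delta[G(t+\delta)]$, and $\E_\delta[G(t+\delta)]\neq G(t)$ unless $G$ is affine, since expectation does not pass inside a nonlinear function; by Taylor expansion the discrepancy is of order $b^2$ (the second moment of $\delta$), not zero. This does not affect existence and uniqueness of the fixed point, but the limit object is a smoothed version of the Picard solution rather than the Picard solution itself --- which is consistent in spirit with the paper's own observation that the regularized model effectively reaches its solution at the shifted time $t_1-b$ rather than at $t_1$.
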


\begin{proof}
Let the starting range of $t$ for the analysis be $(t_0- \sigma , t_0+ \sigma )$. Let $c$ be such that $\forall i, |\phi_i(t)-\phi_0| \leq c$. The solution only exists in the range $t \in (t_0-\frac{a}{2}, t_0+\frac{a}{2})$ where $a<\text{min}(\frac{c}{M},\frac{1}{2L})$. The parameter $b$ for STEER is chosen as $b \leq \frac{a}{2}$ to ensure that the final effective time after sampling $\delta$ $t + \delta \in (t_0-a,t_0+a)$.

Let $\phi_{k+1}(t) = T[\phi_k] = \phi_0 + \int_{t_0}^{t + \delta} f(s,\phi_k) ds , \; \; \; \delta \sim \text{Uniform}(-b,b)$ is well defined on $[t_0-a,t_0+a]$. $\phi_{k+1}(t)$ is continuous since both $\phi_k(t)$ and $f$ are continuous.

$\phi_{k+1}(t)\in\mathbb{R}$ since $| \phi_{k+1}(t)-\phi_0|=|\int_{t_0}^{t_1}f(s,\phi_k(s))ds| \leq M|t-t_0| \leq Ma < c$. This is by choice of $a$.

Let the metric on the space of solutions $\Phi$ be defined such that if $\Delta(\phi_{k},\phi_{k+1})=\text{max}_{[t_0-a,t_0+a]}|\phi_{k}(t)-\phi_{k+1}(t)|$. $\Phi$ is a complete metric space which implies that all Cauchy sequences converge. We show using Lemma \ref{lemma:contraction} that the operator $T$ is a contraction in expectation. Lemma \ref{lemma:sequence_convergence} shows the convergence of the sequence of functions $\{\phi_k\}$. Finally Lemma \ref{lemma:unique} shows why the fixed point is unique with high probability.
\end{proof}

\begin{lemma}
$\E_{|\delta_2|-|\delta_1|}\Delta(T\phi_1,T\phi_2)\leq \frac{1}{2} \Delta(\phi_1,\phi_2)$
\label{lemma:contraction}
\end{lemma}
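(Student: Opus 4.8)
The plan is to control the two distinct ways in which $T\phi_1$ and $T\phi_2$ differ: the integrand (through $\phi_1$ versus $\phi_2$) and the random upper limit (through the independently sampled $\delta_1$ versus $\delta_2$). First I would fix $t$ in the solution interval and add and subtract $\int_{t_0}^{t+\delta_1} f(s,\phi_2(s))\,ds$ to obtain the decomposition
\[
T\phi_1(t) - T\phi_2(t) = \int_{t_0}^{t+\delta_1}\bigl[f(s,\phi_1(s)) - f(s,\phi_2(s))\bigr]\,ds + \int_{t+\delta_2}^{t+\delta_1} f(s,\phi_2(s))\,ds.
\]
This isolates a \emph{Lipschitz term} (the first integral) from a \emph{boundary term} (the second), the latter capturing precisely the mismatch between the two sampled end times.

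For the Lipschitz term I would invoke the hypothesis $|f(t,x_2)-f(t,x_1)|\le L|x_2-x_1|$ together with the definition of $\Delta$, yielding the pointwise estimate $\int_{t_0}^{t+\delta_1} L|\phi_1(s)-\phi_2(s)|\,ds \le L\,|t+\delta_1-t_0|\,\Delta(\phi_1,\phi_2)$. Since $b \le a/2$ forces $t+\delta_1 \in (t_0-a,\,t_0+a)$, we have $|t+\delta_1-t_0|\le a$, and the crucial choice $a < \tfrac{1}{2L}$ then gives $L\,|t+\delta_1-t_0| \le La < \tfrac12$. Taking the maximum over $t$ shows this term contributes at most $\tfrac12\Delta(\phi_1,\phi_2)$, which is exactly the contraction factor sought.

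The remaining work, and the main obstacle, is to show that the boundary term does not spoil the contraction, and this is precisely where the expectation $\E_{|\delta_2|-|\delta_1|}$ enters. Writing $G$ for an antiderivative of $s\mapsto f(s,\phi_2(s))$, the boundary term equals $G(t+\delta_1)-G(t+\delta_2)$; since $\delta_1$ and $\delta_2$ are i.i.d.\ $\text{Uniform}(-b,b)$, we have $\E[G(t+\delta_1)] = \E[G(t+\delta_2)]$, so the \emph{signed} boundary contribution has zero mean, consistent with $\E[\,|\delta_2|-|\delta_1|\,]=0$ by symmetry of the uniform law. The delicate point I would have to argue with care is the interchange of the expectation with the maximum over $t$ and with the absolute value inside $\Delta$: bounding $\E\,\max_t|\cdot|$ by $\max_t|\E(\cdot)|$ is not valid in general, so the cancellation must be set up on the signed quantity \emph{before} passing to the absolute value. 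This is exactly the reason the statement is phrased ``in expectation'' rather than almost surely, and where the symmetry of the distribution of $\delta$ is indispensable.

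Combining the two estimates, I would conclude $\E\,\Delta(T\phi_1,T\phi_2) \le \tfrac12\Delta(\phi_1,\phi_2)$, establishing the contraction in expectation. I expect the Lipschitz term to be routine once the constraint $a < \tfrac{1}{2L}$ is in hand; the genuine difficulty lies entirely in justifying that the randomly perturbed integration limit averages out, which rests on the independence and symmetry of the sampled $\delta_1,\delta_2$ and on handling the non-commutativity of expectation with the supremum norm.
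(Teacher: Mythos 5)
Your decomposition and your Lipschitz estimate follow essentially the same route as the paper's proof: the paper likewise separates a Lipschitz part, bounded by $La\,\Delta(\phi_1,\phi_2)\le\tfrac12\Delta(\phi_1,\phi_2)$, from a boundary part created by the mismatched upper limits, and then appeals to $\E\left[\,|\delta_2|-|\delta_1|\,\right]=0$ via the triangular-distribution lemma. Your splitting is in fact cleaner (a single integral $\int_{t+\delta_2}^{t+\delta_1} f(s,\phi_2(s))\,ds$ rather than the paper's two integrals based at $t$), and the Lipschitz half is handled correctly.

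However, the difficulty you flag in your third paragraph is a genuine gap, and your proposal never closes it --- yet your final paragraph asserts the conclusion anyway. Zero mean of the signed quantity $G(t+\delta_1)-G(t+\delta_2)$ gives no control over $\E\max_t\left|G(t+\delta_1)-G(t+\delta_2)\right|$, which is what actually enters $\E\,\Delta(T\phi_1,T\phi_2)$; the inequality $\E\sup_t|\cdot|\ge\sup_t|\E(\cdot)|$ runs the wrong way, exactly as you observe. Worse, the cancellation cannot be ``set up on the signed quantity'' at all for the metric $\Delta$: take $\phi_1=\phi_2$, so $\Delta(\phi_1,\phi_2)=0$; with independently sampled $\delta_1\neq\delta_2$ the boundary integral is nonzero with positive probability (unless $f$ vanishes along the trajectory), hence $\E\,\Delta(T\phi_1,T\phi_2)>0$ and the claimed inequality fails. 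The honest fixes are either to couple the two applications of $T$ (use the \emph{same} $\delta$ for $\phi_1$ and $\phi_2$, which makes the boundary term vanish identically and the contraction deterministic), or to keep the boundary term as an additive error bounded by $M\,\E|\delta_1-\delta_2|=\tfrac{2Mb}{3}$ and prove convergence only up to a ball of that radius. For what it is worth, the paper's own proof hides the same hole: it bounds $\left|\int_t^{t+\delta_2}f(s,\phi_2)\,ds-\int_t^{t+\delta_1}f(s,\phi_1)\,ds\right|$ by $M|\delta_2|-M|\delta_1|$, whereas the triangle inequality only yields $M(|\delta_2|+|\delta_1|)$, whose expectation is $Mb$, not zero --- so your flagging of the obstruction is more accurate than the paper's treatment, but a flag is not a proof.
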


\begin{proof}
\begin{align}
    &  |T\phi_{2}(t) - T\phi_{1}(t)| \nonumber  \\
    &  = | \int_{t_0}^{t+\delta_{2}} (f(s,\phi_{2}(s)) ds - \int_{t_0}^{t+\delta_{1}} ( f(s,\phi_1(s)) ) ds | \nonumber \\
    &  = | \int_{t_0}^{t} (f(s,\phi_{2}(s) - f(s,\phi_1(s)) ) ds  + \int_{t}^{t+\delta_{2}} (f( s , \phi_{2}(s) )) ds - \int_{t}^{t+\delta_{1}} (f( s , \phi_{1}(s) )) ds |  \nonumber \\
    &   \leq | \int_{t_0}^{t}  (f(s,\phi_{2}(s) - f(s,\phi_1(s)) )  ds + \int_{t}^{t+\delta_{2}} (f( s , \phi_{2}(s) )) ds - \int_{t}^{t+\delta_{1}} (f( s , \phi_{1}(s) )) ds| \nonumber \\
    & \leq |L \int_{t_0}^{t} |\phi_{2}(s) - \phi_1(s)| ds + M | \delta_{2}| - M | \delta_1| |   \nonumber \\
    & \leq |L \Delta(\phi_2,\phi_1) \int_{t_0}^{t} ds + M (| \delta_{2}| -  | \delta_1|)  |  \nonumber \\ 
    & \leq |L \Delta(\phi_2,\phi_1) (t-t_0) + M (| \delta_{2}| -  | \delta_1|)  |  \nonumber \\ 
    & \leq |\Delta(\phi_2,\phi_1) La  + M (| \delta_{2}| -  | \delta_1|) |  \nonumber \\
    & \leq |\frac{1}{2} \Delta(\phi_2,\phi_1) + M(| \delta_{2}| -  | \delta_1|) |   \nonumber \\ %
\end{align}

Since $a$ is chosen such that $a < min(\frac{c}{M},\frac{1}{2L})$, hence $La<\frac{1}{2}$. If $\delta_i \sim U(-b,b)$ then $|\delta_i| \sim U(0,b)$. Further $\E_{|\delta_2|-|\delta_1|} ( |\delta_{2}| -  | \delta_1|)=0$ using Lemma \ref{lemma:diff_uniform_var}. Thus $\E_{|\delta_2|-|\delta_1|}\Delta(T\phi_1,T\phi_2)\leq \frac{1}{2} \Delta(\phi_1,\phi_2)$
\end{proof}

\begin{lemma}
$\E_{|\delta_{i+1}|-|\delta_{i}|} (|\delta_{i+1}| -  |\delta_i|) = 0$ since $|\delta_i| , |\delta_{i+1}| \sim \text{Uniform}(0,b)$ . The difference of 2 uniform random variables $U(0,b)$ follows the standard triangular distribution. 
\label{lemma:diff_uniform_var}
\end{lemma}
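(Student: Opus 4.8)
The plan is to reduce the claim to two elementary facts: the distribution of the folded variable $|\delta_i|$, and linearity of expectation. First I would establish that if $\delta \sim \text{Uniform}(-b,b)$ then $|\delta| \sim \text{Uniform}(0,b)$, via a direct CDF computation: for $0 \le x \le b$,
\[
  \Pr(|\delta| \le x) = \Pr(-x \le \delta \le x) = \frac{2x}{2b} = \frac{x}{b},
\]
which is exactly the CDF of $\text{Uniform}(0,b)$. Equivalently, folding the symmetric density $\tfrac{1}{2b}$ on $[-b,b]$ onto $[0,b]$ doubles it to $\tfrac{1}{b}$. This folding step is the only place that invokes the specific sampling distribution from \eqnref{eq:app_picard_3}.

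Given this, the expectation is immediate. Each of $|\delta_i|,|\delta_{i+1}| \sim \text{Uniform}(0,b)$ has mean $\int_0^b x \cdot \tfrac{1}{b}\,dx = \tfrac{b}{2}$, so by linearity of expectation — which requires no independence assumption —
\[
  \E\big(|\delta_{i+1}| - |\delta_i|\big) = \tfrac{b}{2} - \tfrac{b}{2} = 0,
\]
as claimed. For the secondary remark about the triangular law, I would additionally use independence of the two samples: writing $X = |\delta_{i+1}|$ and $Y = |\delta_i|$ as i.i.d.\ $\text{Uniform}(0,b)$, convolving the uniform density with its reflection yields the symmetric triangular density on $[-b,b]$ peaked at $0$, whose symmetry about the origin gives a distribution-level reconfirmation that the mean vanishes.

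I do not anticipate a genuine obstacle, since the mean-zero statement is essentially a one-line consequence of linearity of expectation. The only point deserving care is the folding step: the lemma is phrased in terms of $|\delta_i| \sim \text{Uniform}(0,b)$ whereas the iteration in \eqnref{eq:app_picard_3} samples $\delta_i \sim \text{Uniform}(-b,b)$, so I would justify that transition explicitly rather than take it as given, since it is precisely what licenses the cancellation of equal means invoked in Lemma \ref{lemma:contraction}.
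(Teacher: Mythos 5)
Your proof is correct, but it takes a genuinely more elementary route than the paper's. The paper sets $X_1 = |\delta_{i+1}|$, $X_2 = |\delta_i|$ as independent $\text{Uniform}(0,b)$ variables and computes the full law of $Y = X_1 - X_2$ by the CDF technique, obtaining the triangular density on $(-b,b)$, and only then reads off $\E[Y]=0$ from the known mean of that distribution. You instead obtain the mean-zero claim in one line from linearity of expectation ($\E|\delta_{i+1}| = \E|\delta_i| = b/2$), which needs no independence assumption and no distributional computation, and you relegate the triangular law to a secondary convolution remark (the one place where independence genuinely is needed). Your route buys robustness and generality: it sidesteps the double integration entirely, and incidentally avoids the small slips in the paper's computation (the joint density is written as $1$ rather than $1/b^2$, i.e.\ implicitly $b=1$, and the density on $0 \le y < b$ should be $b-y$, not $y-b$, which is negative there). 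What the paper's longer computation buys is an explicit verification of the lemma's secondary assertion about the triangular distribution, which your convolution remark also covers, if more briefly. You additionally fill a real gap by proving the folding step $|\delta| \sim \text{Uniform}(0,b)$ from $\delta \sim \text{Uniform}(-b,b)$, which both the lemma statement and Lemma~\ref{lemma:contraction} assert without justification.
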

\begin{proof}
Let $X_1=|\delta_{i+1}| , X_2=|\delta_{i}|$ are independent $U(0,b)$ random variables. Let $Y = X_1-X_2 $.
The joint probability density of $X_1$ and $X_2$ is
$f_{X_1,X2}(x_1,x_2)=1$ , $ 0<x_1<b, 0<x_2<b$
Using the cumulative distribution technique, the c.d.f of Y is

\begin{equation}
    \begin{split}
        F_Y(y) & = P(Y \leq y) \\
              & = P(X_1-X_2 \leq y )\\
              &= \begin{cases}
              \int_0^{b+y} \int_{x_1-y}^b 1 dx_2 dx_1  \; \; \; \; \; -B< y < 0 \\
              1 - \int_y^{b} \int^{x_1-y}_0 1 dx_2 dx_1  \; \; \; \; \; 0 \leq y < b \\
              \end{cases}\\
              &= \begin{cases}
               \frac{b^2}{2} + by + \frac{y^2}{2} \; \; \; \; \; -b< y < 0 \\
               1-\frac{b^2}{2} + by - \frac{y^2}{2}\; \; \; \; \; 0 \leq y < b \\
              \end{cases}
    \end{split}
\end{equation}

Differentiating w.r.t y yields the probability distribution function :
\begin{equation}
f_Y(y) =
\begin{cases}
 y+b \; \; \; \; \; -b< y < 0 \\
 y-b \; \; \; \; \;  0 \leq y < b
\end{cases}
\end{equation}

From the properties of standard triangular distribution, $E_Y[Y] = 0 $

\end{proof}

\begin{lemma}
The sequence of functions $\{ \phi_k \}$ obtained using the transformation T as $\phi_0(t)=\phi_0, \; \phi_{k+1}=T \phi_k$ converges.
\label{lemma:sequence_convergence}
\end{lemma}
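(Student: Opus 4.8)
The plan is to show that $\{\phi_k\}$ is a Cauchy sequence in the complete metric space $(\Phi,\Delta)$, so that a limit exists by completeness. The one genuine twist, relative to the classical Banach/Picard argument, is that $T$ contracts only \emph{in expectation} (Lemma \ref{lemma:contraction}): for an individual realisation of the draws, the term $M(|\delta_{k+1}|-|\delta_k|)$ can enlarge $\Delta(\phi_{k+1},\phi_k)$, so the deterministic geometric-decay estimate is not available pathwise. I would therefore first control the expected increments and then upgrade to an almost-sure statement.

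First I would set $d_k:=\Delta(\phi_{k+1},\phi_k)$ and note that, since $\phi_{k+1}=T\phi_k$ and $\phi_k=T\phi_{k-1}$, we have $d_k=\Delta(T\phi_k,T\phi_{k-1})$, which is exactly the quantity bounded in Lemma \ref{lemma:contraction}. Writing $\mathcal{F}_k=\sigma(\delta_1,\dots,\delta_k)$, the functions $\phi_{k-1},\phi_k$ are $\mathcal{F}_k$-measurable while the fresh draw $\delta_{k+1}$ is independent of $\mathcal{F}_k$; conditioning on $\mathcal{F}_k$ and invoking Lemma \ref{lemma:contraction} together with $\E[|\delta_{k+1}|-|\delta_k|]=0$ (Lemma \ref{lemma:diff_uniform_var}) gives the one-step bound $\E[d_k\mid\mathcal{F}_k]\le\tfrac12\,d_{k-1}$. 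Taking expectations and iterating yields $\E[d_k]\le(\tfrac12)^k\E[d_0]$, where $\E[d_0]=\E[\Delta(\phi_1,\phi_0)]<\infty$ by the uniform bound $|\phi_1(t)-\phi_0|\le M|t+\delta_1-t_0|$ on the compact interval $[t_0-a,t_0+a]$. Telescoping along the triangle inequality, for $m>n$,
\begin{equation*}
\E[\Delta(\phi_m,\phi_n)]\le\sum_{k=n}^{m-1}\E[d_k]\le\E[d_0]\sum_{k=n}^{\infty}\Big(\tfrac12\Big)^{k}=2\Big(\tfrac12\Big)^{n}\E[d_0]\xrightarrow[n\to\infty]{}0,
\end{equation*}
so $\{\phi_k\}$ is Cauchy in the expected-distance ($L^1$) sense.

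To pass from this to genuine convergence I would use summability rather than a pathwise contraction. Since $\sum_k\E[d_k]\le 2\E[d_0]<\infty$, Tonelli's theorem gives $\E\big[\sum_k d_k\big]<\infty$, whence $\sum_k d_k<\infty$ almost surely; thus for almost every realisation of $\{\delta_k\}$ the increments are absolutely summable, $\{\phi_k\}$ is a Cauchy sequence in $(\Phi,\Delta)$, and completeness of $\Phi$ produces a limit $\phi^*$ with $\phi_k\to\phi^*$. The step I expect to be the crux is precisely the transition from the in-expectation contraction to a conclusion strong enough to guarantee convergence: one must handle the conditioning carefully so that the independence of the freshly sampled $\delta_{k+1}$ from the history $\mathcal{F}_k$ makes the factor $\tfrac12$ genuinely compound across iterations (the already-frozen $\delta_k$ appearing inside $\phi_k$ is the delicate measurability point), and it is the Tonelli-based upgrade from $L^1$-Cauchy to almost-sure Cauchy that rescues the argument where the classical pathwise Banach estimate breaks down. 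Uniqueness of the resulting fixed point $\phi^*$ is then treated separately in the following lemma.
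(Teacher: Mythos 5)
Your proposal is correct in substance but takes a genuinely different --- and more careful --- route than the paper. The paper's own proof simply iterates the contraction pathwise: $\Delta(\phi_2,\phi_1)=\Delta(T\phi_1,T\phi_0)\leq\frac{1}{2}\Delta(\phi_1,\phi_0)$, telescopes to a geometric series, and invokes completeness of $\Phi$. In doing so it silently treats the in-expectation bound of Lemma \ref{lemma:contraction} as if it held for every realisation of the draws $\delta_k$, i.e.\ it drops the $M(|\delta_{k+1}|-|\delta_k|)$ term without comment; read literally, its first displayed inequality does not follow from Lemma \ref{lemma:contraction}. You keep that term, derive geometric decay of $\E[\Delta(\phi_{k+1},\phi_k)]$, and then upgrade from $L^1$ decay to almost-sure convergence via Tonelli (almost-sure summability of the increments, hence an almost-surely Cauchy sequence). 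What your route buys is an honest probabilistic statement --- almost-sure convergence of a genuinely random sequence --- which is the sense in which the lemma can actually be true in this setting; what the paper's route buys is brevity, at the cost of a gap between what Lemma \ref{lemma:contraction} asserts and what its proof of this lemma uses.

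One intermediate claim in your write-up is stated too strongly, though it does not break the argument. The conditional bound $\E[d_k\mid\mathcal{F}_k]\leq\frac{1}{2}d_{k-1}$ does not follow from Lemma \ref{lemma:contraction}, precisely because of the point you flag yourself: conditionally on $\mathcal{F}_k$ the variable $|\delta_k|$ is frozen, so $\E\bigl[|\delta_{k+1}|-|\delta_k|\mid\mathcal{F}_k\bigr]=\frac{b}{2}-|\delta_k|$, which is not zero (Lemma \ref{lemma:diff_uniform_var} is an unconditional statement). The clean repair is to bypass conditioning entirely: the estimate established inside the proof of Lemma \ref{lemma:contraction} is pathwise, $d_k\leq\frac{1}{2}d_{k-1}+M\bigl(|\delta_{k+1}|-|\delta_k|\bigr)$, and taking plain unconditional expectations, using linearity and $\E|\delta_{k+1}|=\E|\delta_k|=\frac{b}{2}$, already gives $\E[d_k]\leq\frac{1}{2}\E[d_{k-1}]$ with no independence or filtration argument needed. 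With that substitution your chain --- geometric decay of $\E[d_k]$, Tonelli, $\sum_k d_k<\infty$ almost surely, Cauchy in $(\Phi,\Delta)$, completeness --- goes through unchanged and yields a strictly more rigorous proof than the one in the paper.
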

\begin{proof}
$\Delta(\phi_2,\phi_1)=\Delta(T\phi_1,T\phi_0) \leq \frac{1}{2} \Delta(\phi_1,\phi_0) .$

Similarly, $\Delta(\phi_3,\phi_2)=\Delta(T\phi_2,T\phi_1) \leq \frac{1}{2} \Delta(\phi_2,\phi_1) \leq \frac{1}{4} \Delta(\phi_1,\phi_0) .$

In general $\Delta(T\phi_{n+1},T\phi_n) \leq \left( \frac{1}{2} \right)^n \Delta(\phi_1,\phi_0) $

$\implies \Sigma_{n=0}^{\infty} \Delta(T\phi_{n+1},T\phi_n) \leq \Delta(\phi_1,\phi_0) \Sigma_{n=0}^{\infty} \left( \frac{1}{2} \right)^n  $

Since the above sum converges and the completeness of $\Phi$ proves that the sequence $ \{ \phi_k \} $ converges.
\end{proof}

\begin{lemma}
T has at most one fixed point with high probability.
\label{lemma:unique}
\end{lemma}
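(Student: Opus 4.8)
The plan is to argue by contradiction, turning the contraction-in-expectation bound of Lemma \ref{lemma:contraction} into a probabilistic uniqueness statement via Markov's inequality. Suppose $T$ admits two candidate fixed points $\phi^*$ and $\psi^*$ on $[t_0-a,t_0+a]$, and set $d := \Delta(\phi^*,\psi^*)$. If $d=0$ we are done, so assume $d>0$ and show that the event that both persist as fixed points under fresh random perturbations has vanishing probability.

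First I would record the defining property: for a realized perturbation $\delta$, a fixed point satisfies $T[\phi^*]=\phi^*$ and $T[\psi^*]=\psi^*$, so in particular $\Delta(T[\phi^*],T[\psi^*])=d$ must hold for that realization. Treating $\phi^*,\psi^*$ as given and drawing a fresh $\delta\sim\text{Uniform}(-b,b)$, Lemma \ref{lemma:contraction} gives $\E\,\Delta(T[\phi^*],T[\psi^*])\leq \tfrac12\,d$, where the cross term $M(|\delta_2|-|\delta_1|)$ averages to zero by Lemma \ref{lemma:diff_uniform_var}. Applying Markov's inequality to the nonnegative random variable $\Delta(T[\phi^*],T[\psi^*])$ then bounds the probability that a single fresh application fails to contract the distance,
\[
P\big(\Delta(T[\phi^*],T[\psi^*])\geq d\big)\leq \frac{\E\,\Delta(T[\phi^*],T[\psi^*])}{d}\leq \tfrac12 .
\]

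Next I would iterate this over $n$ independent perturbations $\delta_1,\dots,\delta_n$. Because the $\delta_k$ are drawn independently across iterations, conditioning on the first $k$ draws and using the tower property propagates the factor $\tfrac12$ multiplicatively, yielding $\E\,\Delta(T^n[\phi^*],T^n[\psi^*])\leq (\tfrac12)^n d$ and hence $P\big(\Delta(T^n[\phi^*],T^n[\psi^*])\geq d\big)\leq (\tfrac12)^n$. Since two genuinely distinct fixed points would have to leave the distance at $d$ after every application, the probability that both survive $n$ rounds is at most $(\tfrac12)^n\to 0$; equivalently, the limit of the random iteration is unique with probability at least $1-(\tfrac12)^n$, which is the claimed high-probability conclusion and, combined with Lemma \ref{lemma:sequence_convergence}, completes Theorem \ref{theorem:picard}.

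The main obstacle I anticipate is reconciling the almost-sure fixed-point identity with the merely-in-expectation contraction: one must be careful to treat $\phi^*,\psi^*$ as fixed candidates and to quantify over fresh, independent perturbations, rather than asserting a per-realization contraction factor below one (which would contradict $\Delta(T[\phi^*],T[\psi^*])=d$ for that realization). A secondary technical point is the outer absolute value retained in Lemma \ref{lemma:contraction}: before propagating the geometric decay I would verify that the Jensen slack from $\E|\tfrac12\Delta+M(|\delta_2|-|\delta_1|)|$ is controlled by the zero-mean property of Lemma \ref{lemma:diff_uniform_var} and the choice $La<\tfrac12$, so that the factor $\tfrac12$ is legitimate at each step.
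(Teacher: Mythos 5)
Your argument is correct (modulo Lemma \ref{lemma:contraction}, which you take as given) and follows the same skeleton as the paper's proof: assume two distinct fixed points, observe that fixedness forces $\Delta(T\phi_1,T\phi_2)=\Delta(\phi_1,\phi_2)$, and contradict this with the contraction-in-expectation. Where you genuinely differ is in how ``with high probability'' is cashed out. The paper's proof is a single qualitative sentence: it quotes the per-realization bound $|T\phi_{2}(t) - T\phi_{1}(t)| \leq \tfrac{1}{2} \Delta(\phi_2,\phi_1) + M(|\delta_{2}| - |\delta_1|)$ from one application of $T$ and asserts, without quantification, that distance preservation is therefore unlikely. You instead convert the expectation bound into a probability via Markov's inequality (one-round failure probability at most $\tfrac12$), and then amplify over $n$ independent rounds using the tower property to get an explicit $(\tfrac12)^n$ bound; this is a strict strengthening of what the paper establishes, and under an almost-sure reading of ``fixed point'' your chain in fact yields uniqueness with probability one rather than merely high probability. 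Your two closing caveats are also well placed and identify real issues the paper elides: the realization-versus-expectation tension is exactly why the paper's one-line assertion is not airtight, and the Jensen slack in Lemma \ref{lemma:contraction} is a genuine gap there, since the expectation of $|\tfrac12\Delta + M(|\delta_2|-|\delta_1|)|$ is bounded \emph{below} by $\tfrac12\Delta$, not above --- but that defect afflicts the paper's proof of uniqueness equally, as both arguments inherit it from the contraction lemma rather than introduce it.
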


\begin{proof}
Suppose there were 2 distinct fixed points $\phi_1$ and $\phi_2$. By the definition of a fixed point $T\phi_k=\phi_k$, hence we obtain $\Delta(T\phi_1,T\phi_2)=\Delta(\phi_1,\phi_2)$ which contradicts Lemma \ref{lemma:contraction}  with high probability as Lemma \ref{lemma:contraction} shows that $|T\phi_{2}(t) - T\phi_{1}(t)| \leq \frac{1}{2} \Delta(\phi_2,\phi_1) + M(| \delta_{2}| -  | \delta_1|) $. Hence $\Delta(T\phi_1,T\phi_2)$ would not be equal to $\Delta(\phi_1,\phi_2)$ with high probability.
\end{proof}

\section{Stiff ODE: Ablation Studies}
As discussed in the experiments section of the paper, we use the same setting as the one described in \cite{chapra2010numerical}. 

The ODE is given by 
\begin{equation}
    \frac{dy}{dt}= -1000y + 3000 - 2000e^{-t},
    \label{eq:app_stiff}
\end{equation}
with the initial condition of $y(0)=0$. We use the generalized version of the above equation which is $\frac{dy}{dt}= -ry + 3r - 2re^{-t}$. The generalized equation has the same asymptotic behavior as the original. It also reaches a steady state at $y=3$. Varying $r$ effectively varies the stiffness ratio of the underlying ODE. It thus allows us to analyze the behavior of the various hyperparameters across a wide range of underlying problem difficulty. The experiments were performed using the dormand-prince \cite{dormand1980family} ODE solver.

Since the experimental setting of the stiff ODE converges in minutes rather than hours, we test out a variety of settings for $b$. We try to identify strategies for choosing the hyperparameter $b$ for the proposed STEER regularization. We also consider the use of a Gaussian distribution in place of the Uniform distribution. We further test out the effect of the capacity on the regularization. We perform experiments to observe the behavior by varying the number of units in the hidden layer of the neural network.

\begin{wrapfigure}[20]{r}{0.6\textwidth}
\centering
    \includegraphics[width=0.6\textwidth]{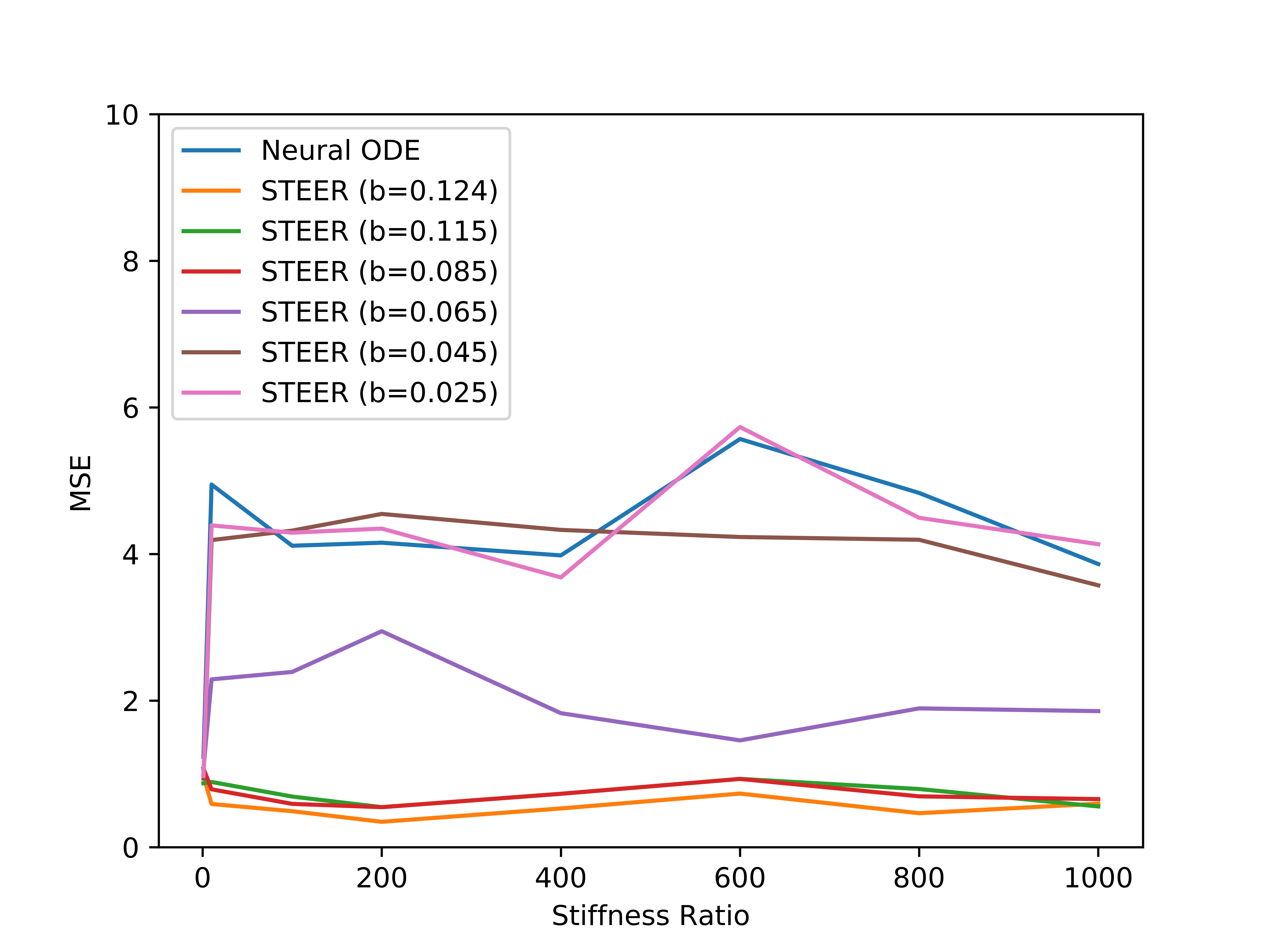}
    \caption{Comparison of the losses for the various choices of b across varying stiffness ratios.} 
    \label{fig:steer_stiffness_comparisons_b_varying}
\end{wrapfigure}

\textbf{Effect of varying $b$:} As we observe from \figref{fig:steer_stiffness_comparisons_b_varying} as the parameter $b$ varies, we obtain a range of behavior in terms of the MSE error across a wide variety of stiffness ratios. The general trend indicates that larger $b$ such as $b=0.124,0.115,0.085$ have similar behavior and achieve the minimum error in general. Smaller $b$ on the other hand shows behavior similar to standard Neural ODE as is evident from the plots of $b=0.025,0.045$. An intermediate value of $b=0.065$ shows behavior which is better than very small values of $b$ while worse behavior than the large values of $b$. This indicates that higher values of $b$ are better for the proposed STEER regularization as long as $b$ is less than the length of the original interval.

\textbf{Effect of varying distributions:} In the proposed STEER regularization we use the Uniform distribution to sample the end point of the integration. The simplicity of the Uniform distribution adds an elegance to the proposed technique. We want to analyze whether it is the inherent stochasticity that makes the technique effective or the particular choice of the Uniform distribution. As we see from \figref{fig:gaussian_stiffness_comparisons_var_varying} we observe a similar trend as we had seen in \figref{fig:steer_stiffness_comparisons_b_varying}. Greater the stochasticity in the end time the better the performance in terms of MSE. 

\begin{wrapfigure}[14]{r}{0.4\textwidth}
\vspace{-0.4in}
\centering
    \includegraphics[width=0.4\textwidth]{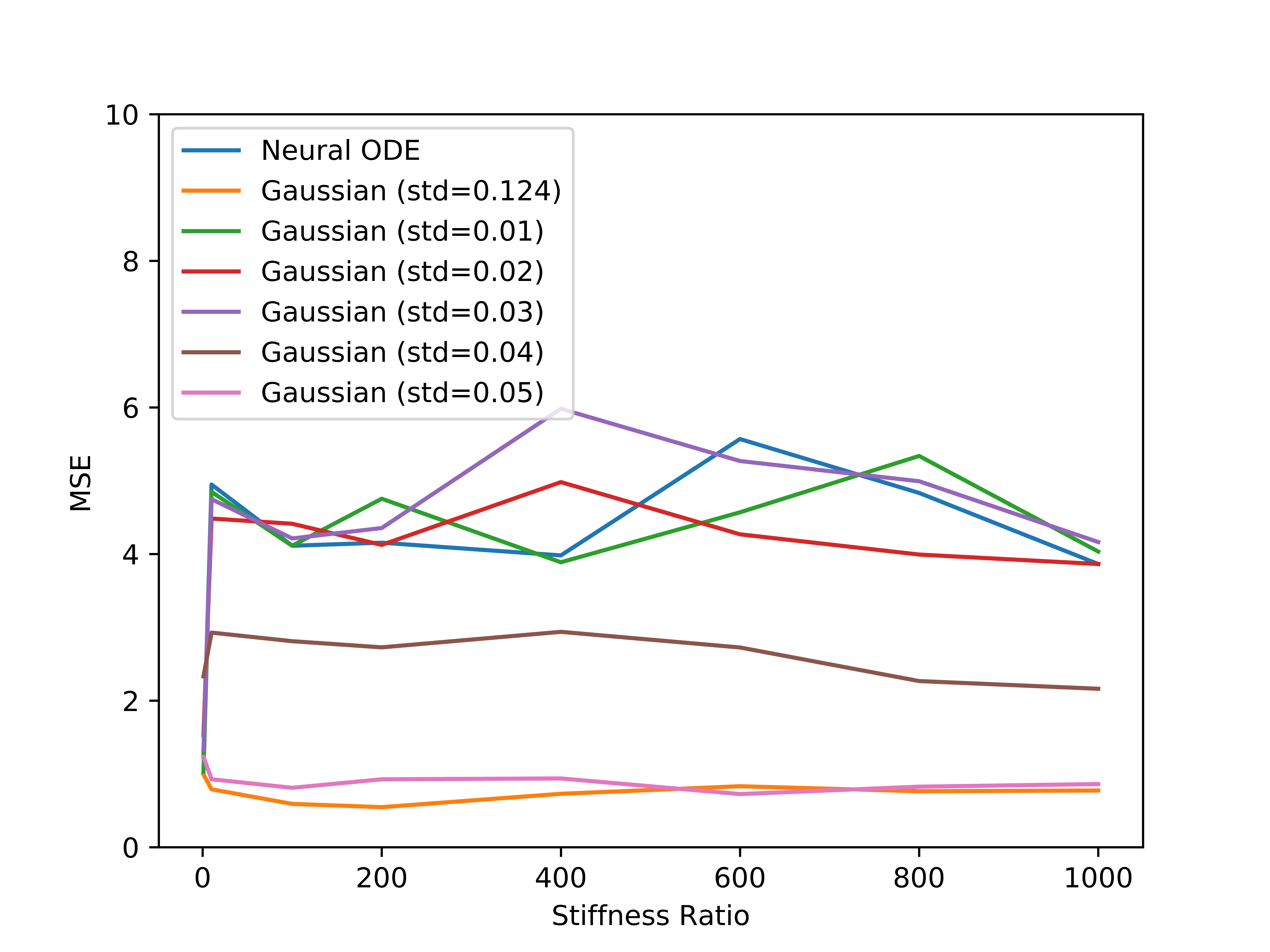}
    \caption{Comparison of the losses for the various choices of standard deviation ($std$) of the Gaussian across varying stiffness ratios.} 
    \label{fig:gaussian_stiffness_comparisons_var_varying}
\end{wrapfigure}

Delving deeper into the experimental setting of the Gaussian distribution, we sample an end time $t \sim N(t_1,std)$, where $t_1$ was the original end time of the integration and $std$ is the parameter controlling the standard deviation of the Gaussian distribution. \figref{fig:gaussian_stiffness_comparisons_var_varying} indicates that higher standard deviations $std=0.124,0.05$ lead to better performance in terms of MSE. Very small standard deviations start approaching a behavior that is similar to the one shown by standard Neural ODE as exemplified by $std=0.01,0.02,0.03$. It is interesting to note that the transition from $std=0.03$ to $std=0.04$ is rather abrupt and shows an intermediate behavior between the smaller and larger values of $std$.

As we observe from \figref{fig:gaussian_stiffness_comparisons_var_varying} we see the effective behavior of the best approaches with the Uniform distribution and the Gaussian distribution respectively compared alongside a standard Neural ODE. This plot indicates that the high stochasticity in the cases of the Uniform and Gaussian distributions leads to lower losses in terms of the MSE. We observe a slight advantage of using Uniform distribution rather than the Gaussian distribution.

Although we observe in this case that the Gaussian distribution leads to similar behavior as the Uniform distribution, it comes along with its own implementation challenges. We observe from \figref{fig:gaussian_stiffness_comparisons_var_varying} that better performance is obtained when $std$ is high. On the flipside when $std$ is high, there might be some sampled values of $t$ which might be less than the initial time $t_0$. To avoid such scenarios, we would have to employ clipping on one side. Clipping on only one side would skew the resulting distribution. Clipping on both sides would add another parameter $clip$. It would decide how much to clip on either side of $t_1$. To simplify the technique and reducing the number of hyperparameters we chose to use the Uniform distribution. While Gaussian distribution with intelligent clipping could be a viable alternative we leave its analysis for future work.

\begin{figure}[]
    \centering
    \begin{tabular}{*{4}{c@{\hspace{5px}}}}
    \includegraphics[width=0.5\linewidth]{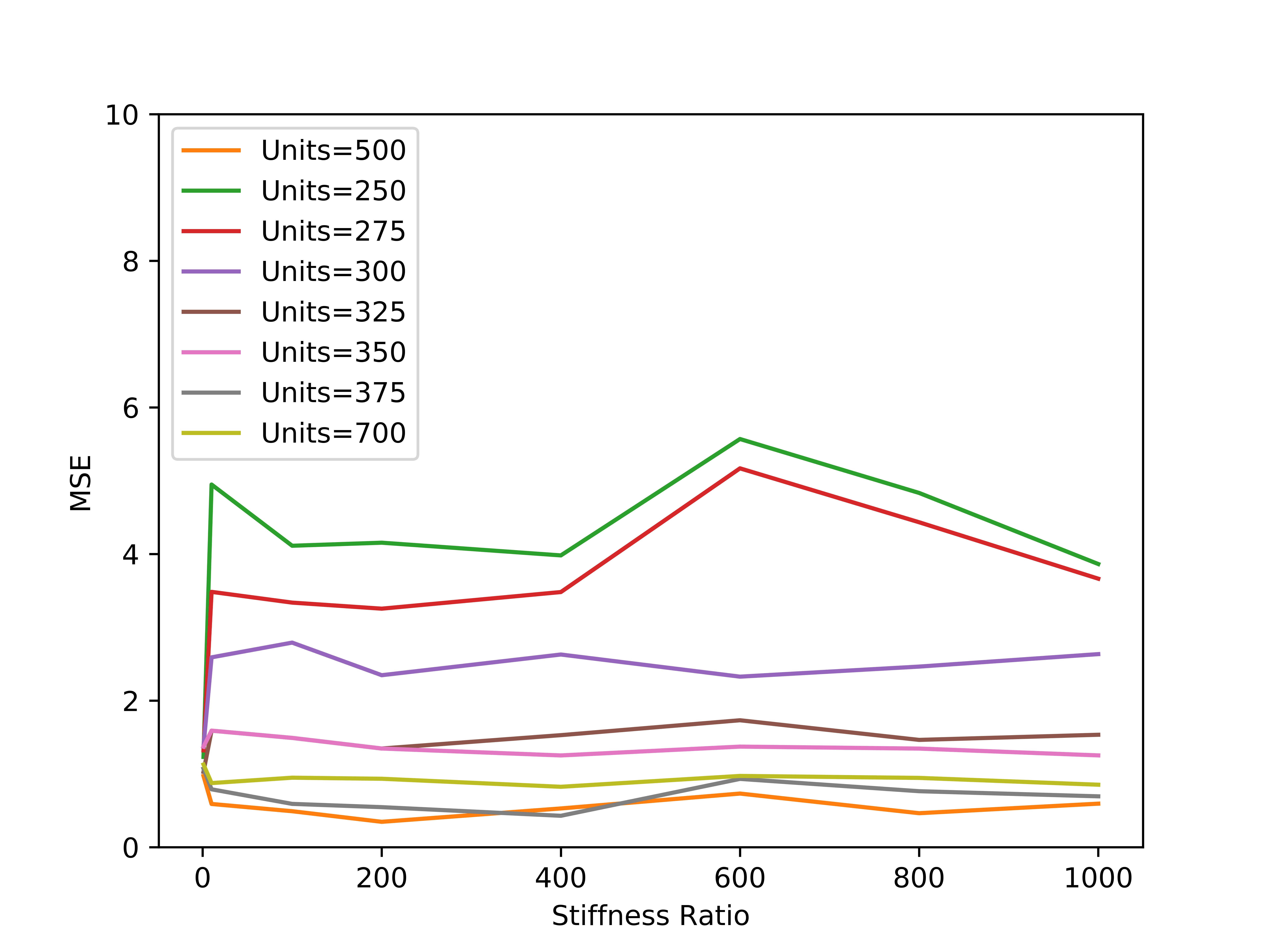} &
    \includegraphics[width=0.5\linewidth]{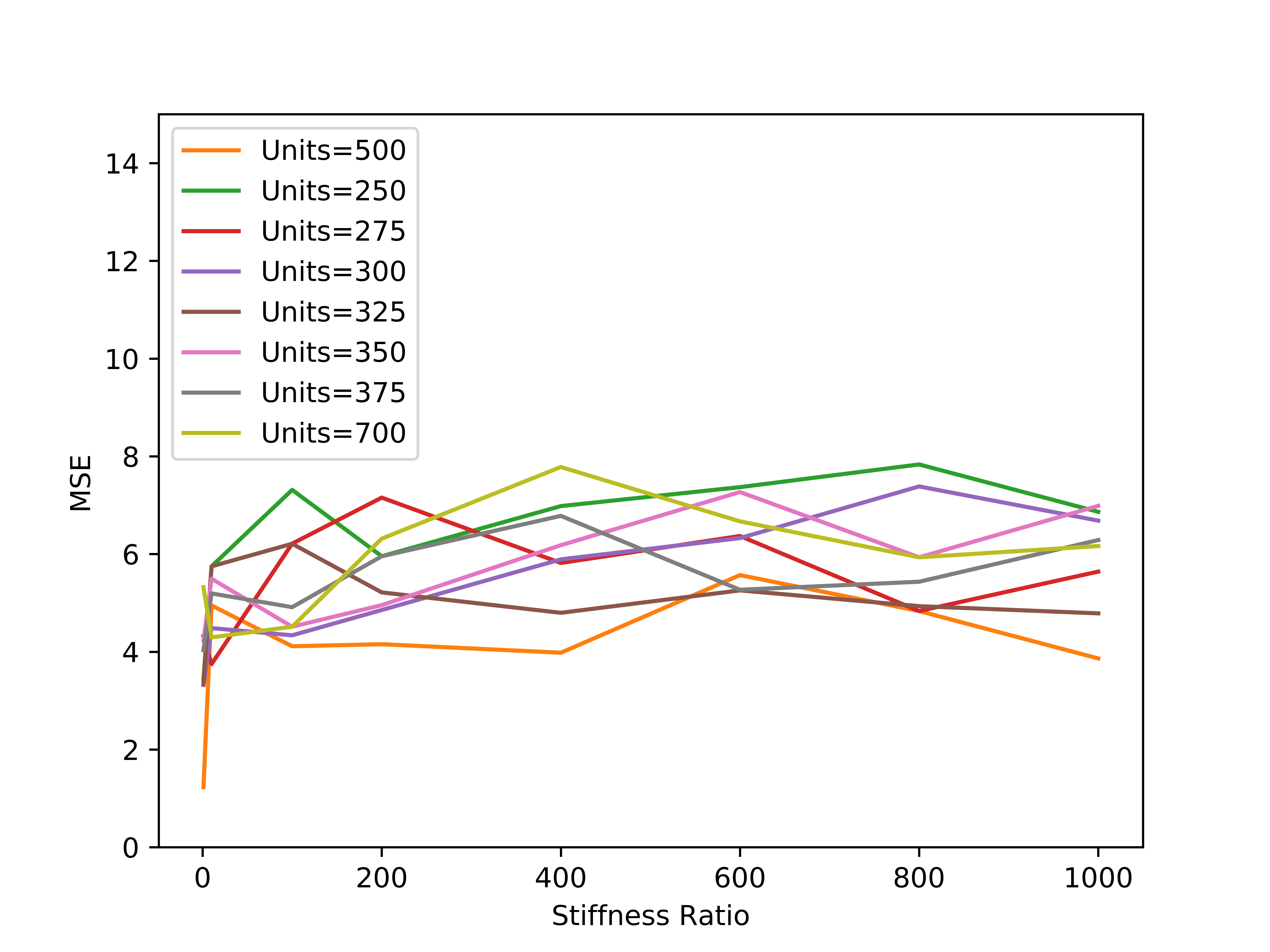}
    \\
    STEER($b=0.124$) & Standard Neural ODE  \\
\end{tabular}
    \caption{Comparison of the losses for the various number of units in the hidden layer for the case of STEER with $b=0.124$ and a stanard Neural ODE across varying stiffness ratios.}
    \label{fig:stiffness_comparisons_unitwise}
\end{figure}
\textbf{Effect of varying the network capacity:} To complete our ablation study, we also compare the effect of varying the number of units in the hidden layer of the network. In case of STEER regularization, the reduction of the number of units generally hurts as shown in \figref{fig:stiffness_comparisons_unitwise}. The worst performance is obtained with $250$ hidden units. The performance in terms of MSE keeps getting better by increasing the number of hidden units till $500$. Increasing further to $700$ leads to similar behavior but slight reduction in performance. In case of a standard Neural ODE on the other hand, the behavior is quite erratic as seen in \figref{fig:stiffness_comparisons_unitwise} .  Increasing the number of units in the hidden layer doesn't consistently decrease the MSE error.

\begin{wrapfigure}[14]{r}{0.4\textwidth}
\vspace{-0.3in}
\centering
    \includegraphics[width=0.4\textwidth]{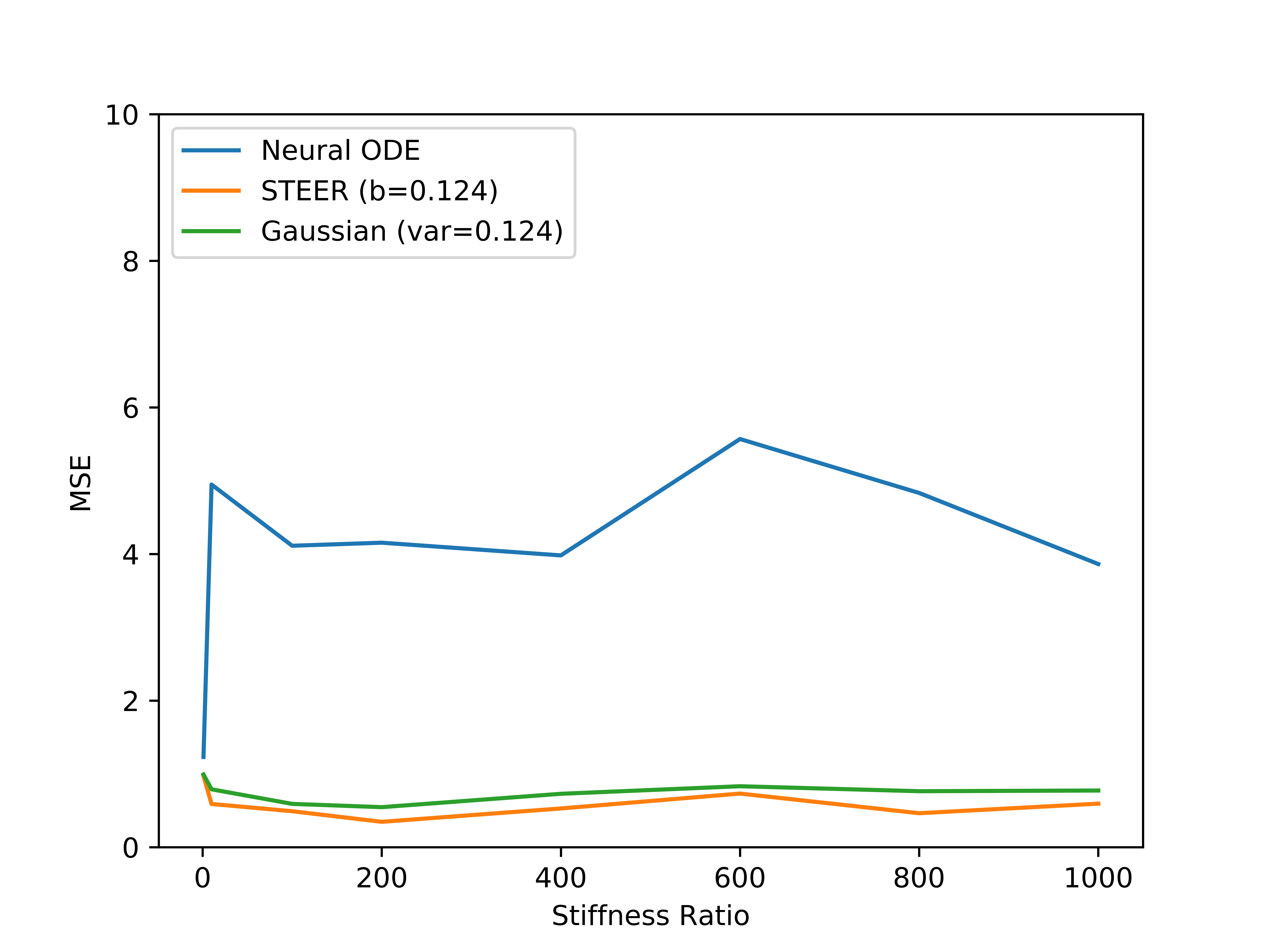}
    \caption{Comparison of a standard Neural ODE along with the Gaussian and Uniform distributions from which the end time $t_1$ can be sampled.} 
    \label{fig:gaussian_vs_steer_stiffness_comparisons}
\end{wrapfigure}

\textbf{Other forms and failure cases:} We analyze other instances of the equation to delve deeper into the effective behavior. When additional terms in form of $e^{-kt}$ are added to the differential equation, our proposed STEER regularization is able to reasonably reach steady state solutions as shown in \figref{fig:stiffness_misc_cases}. We analyze some failure cases in \figref{fig:stiff_failure}. We observe that when a $sin(t)$ term is added our proposed STEER regularization struggles to fit the periodic behavior. Smaller changes such as changing the steady state from $y=3$ to $y=7$, causes our regularization to fail to reach the steady state solution. This indicates that stiff ODEs need further analysis. Alternative techniques or regularizations may be required to deal with harder instances of stiff ODEs. 
\vspace{-6pt}
\begin{figure}[]
    \centering
    \begin{tabular}{*{4}{c@{\hspace{5px}}}}
    \includegraphics[width=0.5\linewidth]{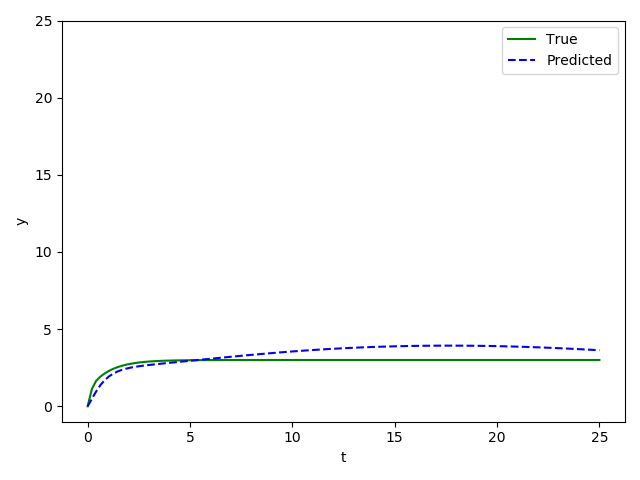} &
    \includegraphics[width=0.5\linewidth]{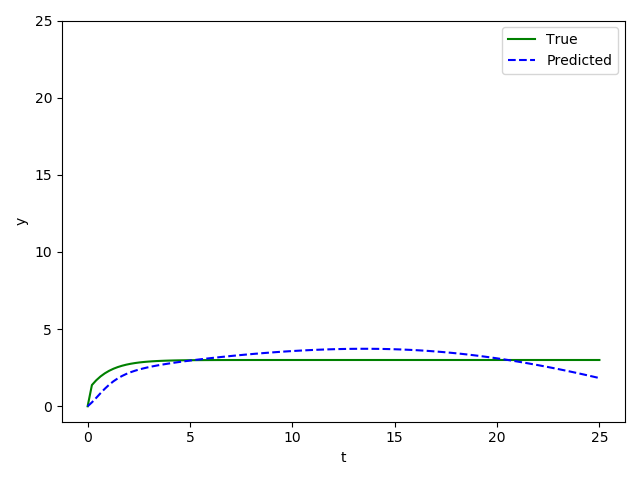}
    \\
    $\frac{dy}{dt}= -1000y + 3000 - 2000(e^{-t} + e^{-10t})  $ &  $\frac{dy}{dt}= -1000y + 3000 - 2000(e^{-t}+e^{-10000t})$ \\
\end{tabular}
    \vspace{-2pt}
    \caption{We experiment with 2 cases which have multiple $e^{-kt}$ in the $\frac{dy}{dt}$ function to be estimated. STEER regularization doesn't totally fail in these scenarios although the performance reduces. }
    \label{fig:stiffness_misc_cases}
\end{figure}

\vspace{-6pt}
\begin{figure}[]
    \centering
    \begin{tabular}{*{4}{c@{\hspace{5px}}}}
    \includegraphics[width=0.5\linewidth]{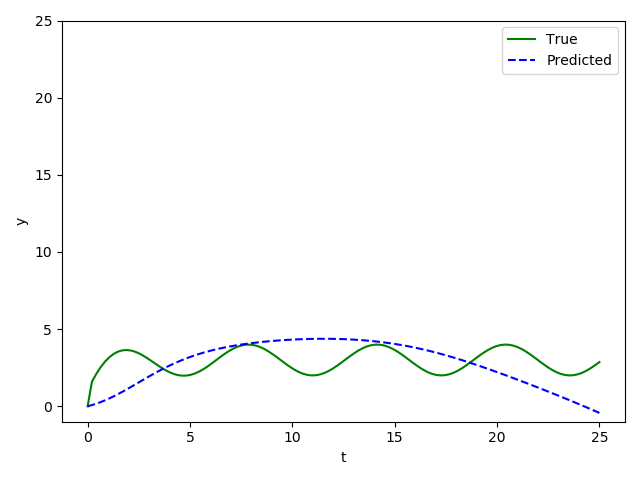} &
    \includegraphics[width=0.5\linewidth]{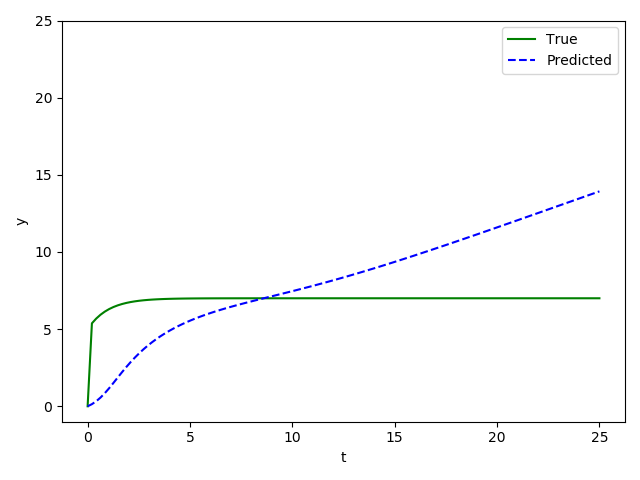}
    \\
    $\frac{dy}{dt}= -1000y + 3000 - 2000e^{-t} + 1000sin(t)$ & $\frac{dy}{dt}= -1000y + 7000 - 2000e^{-t}$  \\
\end{tabular}
    \vspace{-2pt}
    \caption{Failure cases of the proposed STEER regularization. Adding a periodic term to the equation $\frac{dy}{dt}$ makes it harder to learn. Changing the behavior of the steady state from $y=3$ to $y=7$ causes STEER to not reach a viable steady state solution. Note that a standard Neural ODE also fails in these cases}
    \label{fig:stiff_failure}
\end{figure}

\begin{table}[ht!]
    \centering
        \begin{tabular}{l c c }
        \toprule
        & \multicolumn{2}{c}{MNIST} \\
        & BITS/DIM & TIME \\
        \midrule
        \hdashline
        \multicolumn{3}{c}{$t_1:t_1+b=t_1^{original}=1$} \\
        \hdashline
        FFJORD RNODE + STEER ($b=0.5$) & $0.971$ & $16.32$ \\
        FFJORD RNODE + STEER ($b=0.375$) & $0.973$ & $17.13$  \\
        FFJORD RNODE + STEER ($b=0.25$) & $0.972$ & $19.71$  \\
        FFJORD RNODE + STEER ($b=0.125$) & $0.973$ & $22.32$  \\
        \hdashline
        \multicolumn{3}{c}{$t_1:t_1=t_1^{original}=1$} \\
        \hdashline
        FFJORD RNODE + STEER ($b=0.5$) & $0.974$ & $25.81$ \\
        FFJORD RNODE + STEER ($b=0.375$) & $0.98$ & $25.72$  \\
        FFJORD RNODE + STEER ($b=0.25$) & $0.971$ & $25.23$  \\
        FFJORD RNODE + STEER ($b=0.125$) & $0.976$ & $24.32$  \\
        \bottomrule %
        \end{tabular}
    \caption{Comparison of $b$ in case of Continuous Normalizing Flows. Greater values of $b$ lead to faster convergence. Altering $t_1$ such that $t_1+b=t_1^{original}$ leads to faster convergence times.}  %
    \label{tab:b_comparison_cnf_results}
\end{table}

\section{Generative Models: Ablation Studies}
\vspace{-3pt}
We analyze the effect of varying $b$ in case of continuous normalizing flow based generative models. We conduct ablation studies on the setting of RNODE \cite{finlay2020train} since it converges much faster. Multiple possible values of $b$ could be experimented, since convergence is fast. 
As we observe from \tabref{tab:b_comparison_cnf_results} there is a general trend which is similar to the one we observed in the case of stiff ODEs. The greater the stochasticity due to a larger value of $b$, the faster the convergence time. The best results were obtained by using $b=0.5$. In case of generative modeling, to obtain faster convergence, the end time $t_1$ had to be constrained such that $t_1+b=t_1^{original}$. The original ending time was $t_1^{original}=1$ for the experiments in \tabref{tab:b_comparison_cnf_results}.  The bottom half of  \tabref{tab:b_comparison_cnf_results} demonstrates that if the $t_1$ is not altered i.e. $t_1=t_1^{original}$, faster convergence is not observed.

\end{document}